\newtheorem{definition}{$\bf {Definition}$}
\newtheorem{lemma}{$\bf {Lemma}$}
\newtheorem{proposition}{$\bf {Proposition}$}
\begin{document}
%\inenumbers
\title{Learnable Faster Kernel-PCA for Nonlinear Fault Detection: Deep Autoencoder-Based Realization}

%\author{\IEEEauthorblockN{Zelin Ren, Xuebing Yang, Yuchen Jiang }}
\author{\IEEEauthorblockN{Zelin Ren, Xuebing Yang, Yuchen Jiang and Wensheng Zhang }
\thanks{Corresponding author: Wensheng Zhang.}

\thanks{Z. Ren and W. Zhang are with the Research Center of Precision
Sensing and Control, Institute of Automation, Chinese Academy of Sciences,
Beijing, 100190, China, and University of Chinese Academy of Sciences,
Beijing, 101408, China (e-mail:rzl8816@126.com; zhangwenshengia@hotmail.com).}
}

% The paper headers
%\markboth{IEEE Transactions on Cybernetics}%
%\markboth{Journal of \LaTeX\ Class Files,~Vol.~14, No.~8, August~2015}%
%{Shell \MakeLowercase{\textit{et al.}}: Bare Demo of IEEEtran.cls for IEEE Journals}

\maketitle

\begin{abstract}
Kernel principal component analysis (KPCA) is a well-recognized nonlinear dimensionality reduction method that has been widely used in nonlinear fault detection tasks. As a kernel trick-based method, KPCA inherits two major problems. First, the form and the parameters of the kernel function are usually selected blindly, depending seriously on trial-and-error. As a result, there may be serious performance degradation in case of inappropriate selections. Second, at the online monitoring stage, KPCA has much computational burden and poor real-time performance, because the kernel method requires to leverage all the offline training data. In this work, to deal with the two drawbacks, a learnable faster realization of the conventional KPCA is proposed. The core idea is to parameterize all feasible kernel functions using the novel nonlinear DAE-FE (deep autoencoder based feature extraction) framework and propose DAE-PCA (deep autoencoder based principal component analysis) approach in detail. The proposed DAE-PCA method is proved to be equivalent to KPCA but has more advantage in terms of automatic searching of the most suitable nonlinear high-dimensional space according to the inputs. Furthermore, the online computational efficiency improves by approximately 100 times compared with the conventional KPCA. With the Tennessee Eastman (TE) process benchmark, the effectiveness and superiority of the proposed method is illustrated.
\end{abstract}

\begin{IEEEkeywords}
Kernel principal component analysis (KPCA), fault detection, process monitoring, autoencoder, data-driven.
\end{IEEEkeywords}

\section{Introduction}
\label{sec:introduction}
\IEEEPARstart{F}{ault} detection plays an important role in maintaining normal operating conditions and ensuring system safety, which has attracted extensive attention in recent years \cite{gaotie2015, pengmpe2013, luoitm2018, pengtie2016}. Efficient fault detection can greatly strengthen the system reliability and reduce unexpected economic loss \cite{jiangisj2021}. With the increasing complexity of industrial processes, data-driven fault detection and process monitoring techniques take the dominant position in the field of fault detection and have become a research hotspot \cite{han2021tcyb, yutcyb, kodamana2019tcst, niutcyb, fazai2019eaai, naderiauto2017}. How to excavate the potential system information for different fault detection tasks from the massively available sensor data remains challenging.

Kernel principal component analysis (KPCA) is a typical and well-known data-driven method for dimensionality reduction based on kernel trick \cite{scholkopf1996kernel, geces2009}. Owing to its good nonlinear feature extraction capability, KPCA and its extensions have been widely used for various fault detection and multivariate statistical process monitoring (MSPM) tasks of industrial processes currently \cite{jiang2019iecr, yin2015tie, zhang2018n}. Simmini {\it et al.} proposed a self-tuning KPCA method to detect the faults in chiller systems with a high accuracy \cite{simmini2021tcst}. Fan {\it et al.} presented a fast incremental nonlinear matrix completion method, which can make KPCA successfully monitor nonlinear processes when there exist missing data \cite{fan2021tii}. Deng {\it et al.} integrated principal component analysis (PCA) and KPCA and presented a serial PCA that has the ability to better exploit the underlying process's structure in order to enhance fault diagnosis performance \cite{deng2018tnnls}.

Although the kernel method has the ability to monitor nonlinear systems and detect faults by observing the fluctuations of features in high-dimensional space, it still has several problems and drawbacks. First, the choice of kernel function has a great influence on the detection performance of the method. The forms of kernel function are multitudinous, and once a specific form of kernel function is determined, such as radial basis function (RBF) kernel, that means a subset of nonlinear mapping functions is determined. When its kernel parameters are chosen by offline testing (such as $\sigma $ in RBF kernel), a specific nonlinear mapping is finally determined. It can be seen that this process has blindness, because the choice of kernel function totally depends on artificial selection and the kernel parameters are selected by trial and error. When there is no domain knowledge/expert experience, blind selection of kernel parameters will greatly limit the optimal selection of high-dimensional space. If the domain knowledge/expert experience is unsuitable for the working condition, it will also lead to unsatisfactory kernel parameters for fault detection. Thus, how to automatically search a proper kernel parameter is an intractable issue. Second, the online detection of the kernel method requires all offline samples. It causes a huge amount of calculation and has a serious influence when having demand on timely detection. In order to tackle the problem of low-efficiency online calculation existing in kernel methods, it is necessary to propose a new approximate nonlinear mapping method to replace current kernel methods.

To deal with the drawbacks of the existing KPCA-based methods, we consider reforming the DAE network to realize a learnable and faster KPCA by designing a network structure. Deep autoencoder (DAE) is a basic and popular data-driven deep learning method \cite{suntcyb, sakurada2014mlsda, ahmed2021tpami, creswell2019}. Due to its outstanding performance on powerful nonlinear representation for mass data, the potential of DAE has been demonstrated in industry. Recently, many researchers have presented a large number of variants, which have been prosperously applied to the fault detection and diagnosis area \cite{tang2021isa, liutcyb, yu2020tcst, jang2020tii, zhang2018jpc, ren2020tii}. How to adopt DAE to efficiently serve various fault detection tasks, instead of kernel methods, has become the recent tendency.

In this article, we handle the aforementioned problems existing in kernel method by proposing a DAE-FE (deep autoencoder based feature extraction) framework, which includes an encoder module, a linear feature extraction (LFE) module, and a decoder module. We prove that the DAE-FE framework is essentially equivalent to a learnable and faster kernel method. Further, based on the DAE-FE framework, a DAE-PCA (deep autoencoder based principal component analysis) method is proposed to transform PCA into the corresponding nonlinear approach. The difference between KPCA and DAE-PCA is that the process that KPCA transforms a linear approach into the corresponding nonlinear version is composed of two stages. The first stage is to determine kernel function by manual selection. The second stage is to apply the determined kernel function to convert the linear method into a nonlinear one. Therefore, for the kernel method, the setting of the kernel parameters is completely artificial and independent of the following linear method. The effect of nonlinearity depends entirely on artificial setting and has blindness. As for DAE-PCA method, it is an end-to-end learning process. When the network structure is determined, the encoder module and decoder module, as nonlinear modules, are learned together with the linear module in each back-propagation. The benefit of DAE-PCA is that the nonlinear modules serve the linear module and automatically search for a more suitable higher-dimensional space, which KPCA cannot achieve. In the detailed design of DAE-PCA, a PCA module is devised to achieve the function of PCA dimensionality reduction based on Cayley Transform \cite{cayley1846}. In addition, in order to enhance the representation ability of the extracted system features, an extra regularization term is introduced to reduce the variance of the feature samples in each dimension, which is beneficial for nonlinear fault detection. The key contributions of our DAE-PCA can be summarized as follows.

\begin{enumerate}
\item[1)] A DAE-FE framework is proposed for nonlinear fault detection. Based on the DAE-FE framework, a DAE-PCA method is further proposed, which is theoretically proven to be equivalent to a learnable KPCA. Compared with KPCA, the DAE-PCA method can automatically acquire the weight parameters of the network through training, unlike KPCA that selects kernel parameters blindly, depending seriously on trial-and-error.

\item[2)] The proposed DAE-PCA method presents an alternative faster implementation of KPCA. In comparison with the conventional KPCA, the DAE-PCA method costs less time for online detection since its online detection time do not rely on the amount of training data, which greatly meets the requirement of real-time detection.

\item[3)] To obtain a better orthogonal load matrix, the idea of Cayley Transform is introduced to the design of the DAE-PCA method. Moreover, a regularization term is presented for the DAE-PCA method in order to constrain extracted system features to have a smaller variance on each feature dimension. It makes the feature samples more compact and have a better representation capability, which greatly improves the fault detection precision and stability.	

\end{enumerate}

The rest of this article is organized as follows. In Section II, KPCA and DAE are briefly illustrated. Section III first presents the DAE-FE framework. After that, the proposed implementation of DAE-PCA is presented for nonlinear fault detection. In Section IV, the effectiveness and feasibility of the proposed approach are demonstrated in the Tennessee Eastman (TE) process benchmark. Finally, Section V concludes this article.

\section{Preliminaries}
\subsection{KPCA}
Suppose that there are $m$ sensors in an industrial system and each sensor collects $N$ samples, the input variable matrix can be denoted as $\mathbf{X}$, i.e., \begin{center}$\mathbf{X} = [{\mathbf{x}_1},{\mathbf{x}_2},...,{\mathbf{x}_N}]^{\top} \in {\mathbb{R}^{N \times m}}.$\end{center}

As a typical linear self-supervised model, the traditional PCA \cite{wise1990pca} model achieves dimensionality reduction by the principle of capturing the maximum variance of the input data, and it can be formulated by minimizing the reconstruction error:
\begin{equation}
\begin{array}{l}
	\mathop {\min }\limits_\mathbf{P} \left\| {\mathbf{X} - \mathbf{X}\mathbf{P}{\mathbf{P}^{\top}}} \right\|_F^2\\
	s.t.{\kern 1pt} {\kern 1pt} {\kern 1pt} {\mathbf{P}^{\top}}\mathbf{P} = {\mathbf{I}_a},
\end{array}
\end{equation}
where $\mathbf{P} \in {\mathbb{R}^{m \times a}}$ is the orthogonal load matrix, $a$ denotes the feature dimension after PCA operation, and ${\left\| {  \cdot } \right\|_F}$ represents $F$-norm. Furthermore, we can obtain the dimensionality-reduction feature $\mathbf{T} \in {\mathbb{R}^{N \times a}}$, i.e., the score matrix, which can be calculated by
\begin{equation}
\mathbf{T} = {\left[ {{\mathbf{t}_1}, \ldots ,{\mathbf{t}_N}} \right]^{\top}} = \mathbf{X}\mathbf{P} \in \mathbb{R}^{N \times a},
\end{equation}

KPCA is a nonlinear PCA method by introducing the kernel method. Kernel method aims to map the original process variables into a high-dimension reproducing kernel Hilbert space (RKHS) through a proper nonlinear mapping function, thus solving linearly inseparable problems \cite{scholkopf1996kernel}. Given a nonlinear mapping function $\phi \left( {\cdot} \right)$, it maps the original samples ${\mathbf{x}_i}\left( {i = 1,2, \cdots ,N} \right)$ into a RKHS $\mathcal{H}$, i.e., ${\mathbf{x}_i} \in {\mathbb{R}^m} \to \phi \left( {{\mathbf{x}_i}} \right) \in {\mathbb{R}^d}$, where $d$ is the dimension of $\mathcal{H}$. Thus, $\mathbf{X}$ is mapped into a nonlinear feature matrix $\mathbf{\Phi} $,
\begin{center}$
	\mathbf{\Phi}  = {\left[ {\phi \left( {{\mathbf{x}_1}} \right),\phi \left( {{\mathbf{x}_2}} \right), \cdots ,\phi \left( {{\mathbf{x}_N}} \right)} \right]^{\top}} \in {\mathbb{R}^{N \times d}}.$\end{center}
Notice that $\mathbf{\Phi}$ needs to be centralized to be zero mean,
\begin{equation}
	\begin{aligned}
		\mathbf{\bar \Phi} =  {\left[ {\overline \phi  \left( {{\mathbf{x}_1}} \right),\overline \phi  \left( {{\mathbf{x}_2}} \right), \ldots ,\overline \phi  \left( {{\mathbf{x}_N}} \right)} \right]^{\top}} =   {\mathbf{H}_e}\mathbf{\Phi}.
	\end{aligned}
\label{eq:He}
\end{equation}
where ${\mathbf{H}_e}  = \left( {{\mathbf{I}_N} - {N^{ - 1}}{\mathbf{1}_N}\mathbf{1}_N^{\top}} \right)$ is the centering matrix and ${\mathbf{1}_N} = {\left[ {1,1, \ldots ,1} \right]^{\top}} \in {\mathbb{R}^N}$.

Accordingly, the optimization problem of KPCA can be written as
\begin{equation}
\begin{array}{l}
	\mathop {\min }\limits_\mathbf{P} \left\| {\mathbf{\bar \Phi}  - \mathbf{\bar \Phi}\mathbf{P}{\mathbf{P}^{\top}}} \right\|_F^2\\
	s.t.{\kern 1pt} {\kern 1pt} {\kern 1pt} {\mathbf{P}^{\top}}\mathbf{P} = {\mathbf{I}_a},
\end{array}
\label{eq:KPCA}
\end{equation}
where $\mathbf{P} \in {\mathbb{R}^{d \times a}}$ is the orthogonal load matrix. Then, the score matrix $\mathbf{T}$ is computed by $\mathbf{T} = \mathbf{\bar \Phi}\mathbf{P}$.

When finishing model establishment, PCA or KPCA will map the original space into a principal component subspace ($\mathcal{PS}$) and a residual subspace ($\mathcal{RS}$), thereby achieving fault detection in these two subspaces.

\subsection{DAE}
The DAE can be trained to extract nonlinear system features by minimizing the reconstruction error of the input in an unsupervised self-learning manner. The main advantage of DAE is reconstructing the input data with fewer errors and effectively extracting the nonlinear feature. In DAE, the structure is made up of two parts: an encoder and a decoder. The encoder aims to extract the system features by nonlinear mapping of the input, while the decoder is used to reconstruct the input.

Let $\mathbf{X}$ and $\mathbf{\hat X} \in {\mathbb{R}^{N \times m}}$ be the input and output of DAE. The encoder and the decoder are respectively represented by nonlinear mappings $En\left(  \cdot  \right)$ and $De\left(  \cdot  \right)$, of which the nonlinearity is expressed by nonlinear activation functions. Formally, we have
\begin{equation}
\mathbf{\hat X} = De\left( {En\left( \mathbf{X} \right)} \right).
\label{eq:Xfs}
\end{equation}

The loss function for training DAE can be expressed by
\begin{equation}
loss = \mathcal{L}\left( {\mathbf{X},\mathbf{\hat X}} \right) + \Omega,
\end{equation}
where $\mathcal{L}\left(  \cdot  \right)$ is the reconstruction loss and $\Omega $ denotes regularization terms for specific problems or prior knowledge. The common choice of $\mathcal{L}\left(  \cdot  \right)$ is the sum of square errors,
\begin{equation}
\mathcal{L}\left( {\mathbf{X}, \mathbf{\hat X}} \right) = {\left\| {\mathbf{X} - \mathbf{\hat X}} \right\|_F^2}.
\end{equation}

When the network has finished training, the extracted nonlinear system feature $\mathbf{T} \in {\mathbb{R}^{N \times a}}$ and the reconstruction error $\mathbf{\tilde X} \in {\mathbb{R}^{N \times m}}$ can be acquired by
\begin{equation}
\mathbf{T} = En\left( \mathbf{X} \right),
\end{equation}
\begin{equation}
\mathbf{\tilde X} = \mathbf{X} - \mathbf{\hat X}.
\label{eq:xrs}
\end{equation}

For fault detection tasks, $\mathbf{T}$ and $\mathbf{\tilde X}$ can be utilized to monitor the $\mathcal{PS}$ and $\mathcal{RS}$, respectively.

%\subsection{Problem Formulation and Motivation}

\section{Methodology}
\subsection{DAE-FE Framework}
The proposed DAE-FE framework is to transform linear feature extraction methods to their nonlinear versions using neural networks. The whole DAE-FE network structure is illustrated in Fig. \ref{fig:DAE-FE}, which consists of three modules: the encoder module, the LFE module, and the decoder module. The encoder and decoder modules aim to realize the transformation from the original input space $ \mathcal{X}$ and RKHS $ \mathcal{H}$ respectively. The purpose of the LFE module is to utilize a neural network to achieve the function of LFE, thereby obtaining nonlinear system features in $ \mathcal{H}$. LFE is usually solved by matrix operation. In order to have the same form as other modules, it is necessarily transformed into the structure of neural network. In this way, the LFE module can combine the encoder module and the decoder module to jointly realize an end-to-end solution through back-propagation.

\subsubsection{Encoder Module}
It utilizes $En\left(  \cdot  \right)$ to map the original input $ \mathbf{ X }\in {\mathbb{R}^{N \times m}}$ from the input domain $\mathcal{X}$ into a nonlinear feature space $ \mathcal{H}$ to obtain the feature matrix $\mathbf{\Phi}  = En\left(\mathbf{ X } \right) \in {\mathbb{R}^{N \times d}}$.

It will be proven in Proposition 1 that $En\left(  \cdot  \right)$ is in essence a learnable nonlinear function that can construct a kernel function. To describe the relationship between $En\left(  \cdot  \right)$ and kernel method, we need to give the following definition and lemma, of which the relevant proofs can be referred to the literature \cite{mohri2018book}.

\begin{definition}
Given the $N$-dimensional input domain $\mathcal{X} \subset {\mathbb{R}^N}$, a function $K: \mathcal{X} \times \mathcal{X} \to \mathbb{R}$ is called a kernel over $\mathcal{X}$ which is expressed by
\begin{center}
$\forall \mathbf{x},\mathbf{x}' \in \mathcal{X},{\kern 1pt} {\kern 1pt} {\kern 1pt} {\kern 1pt} {\kern 1pt} {\kern 1pt} K\left( {\mathbf{x},\mathbf{x}'} \right) = \left\langle {\phi \left(\mathbf{x} \right),\phi \left( {\mathbf{x}'} \right)} \right\rangle  = \phi {\left(\mathbf{x} \right)^{\top}}\phi \left( {\mathbf{x}'} \right),$
\end{center}
where $\left\langle  \cdot  \right\rangle $ denotes inner product operation and $\phi \left(  \cdot  \right)$ is a nonlinear mapping function.
\end{definition}

\begin{lemma}
Let kernel $K: \mathcal{X} \times \mathcal{X} \to \mathbb{R}$ be a continuous and symmetric function. Then, $K$ is said to be a positive definite symmetric (PDS) kernel if for any $\left\{ {{\mathbf{x}_1}, \ldots ,{ \mathbf{x}_N}} \right\} \subseteq \mathcal{X}$, the Gram matrix $\mathbf{K} = {\left[ {K\left( {{\mathbf{x}_i},{\mathbf{x}_j}} \right)} \right]_{i,j}} \in {\mathbb{R}^{N \times N}}$ is symmetric positive semidefinite (SPSD).
\end{lemma}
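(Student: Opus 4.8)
The plan is to establish the claim by tying the PDS property directly to the inner-product representation supplied by Definition 1. Since the statement asserts that a kernel built from a feature map $\phi$ produces an SPSD Gram matrix for every finite sample, I would prove the two defining properties of an SPSD matrix in turn: symmetry and nonnegativity of all its quadratic forms.

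First I would verify symmetry. Fixing an arbitrary finite subset $\{\mathbf{x}_1, \ldots, \mathbf{x}_N\} \subseteq \mathcal{X}$, the $(i,j)$ entry of the Gram matrix is $K(\mathbf{x}_i, \mathbf{x}_j) = \phi(\mathbf{x}_i)^\top \phi(\mathbf{x}_j)$ by Definition 1. Because the inner product is symmetric, $\phi(\mathbf{x}_i)^\top \phi(\mathbf{x}_j) = \phi(\mathbf{x}_j)^\top \phi(\mathbf{x}_i)$, so $K(\mathbf{x}_i, \mathbf{x}_j) = K(\mathbf{x}_j, \mathbf{x}_i)$ and hence $\mathbf{K} = \mathbf{K}^\top$; this is also consistent with the symmetry hypothesis placed on $K$.

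The core step is positive semidefiniteness. For an arbitrary coefficient vector $\mathbf{c} = [c_1, \ldots, c_N]^\top \in \mathbb{R}^N$, I would expand the quadratic form via Definition 1 and then collapse the double sum using bilinearity of the inner product:
\begin{equation}
\mathbf{c}^\top \mathbf{K} \mathbf{c} = \sum_{i=1}^N \sum_{j=1}^N c_i c_j \left\langle \phi(\mathbf{x}_i), \phi(\mathbf{x}_j) \right\rangle = \left\| \sum_{i=1}^N c_i \phi(\mathbf{x}_i) \right\|^2 \geq 0.
\end{equation}
Since $\mathbf{c}$ was arbitrary, $\mathbf{K}$ is positive semidefinite, and combined with symmetry this shows $\mathbf{K}$ is SPSD, as required.

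I expect no genuine obstacle in this (forward) direction: once the inner-product representation is in hand, the argument reduces to the one-line computation above. The substantive difficulty instead lives in the converse, namely reconstructing a feature map $\phi$, equivalently a Hilbert space, from nothing more than the assumption that every finite Gram matrix is SPSD. That is the Moore--Aronszajn / Mercer construction: one spans a space by the functions $K(\mathbf{x}, \cdot)$, equips it with the inner product determined by $\left\langle K(\mathbf{x}, \cdot), K(\mathbf{x}', \cdot) \right\rangle = K(\mathbf{x}, \mathbf{x}')$, verifies well-definedness and positivity from the SPSD hypothesis, and completes the space to obtain the RKHS $\mathcal{H}$. Since the paper defers this to the cited reference \cite{mohri2018book}, I would only sketch that completion rather than carry out the functional-analytic details.
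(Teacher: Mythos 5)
Your computation is correct, but be aware that the paper does not actually prove this statement: Lemma~1 is phrased as a definition (``$K$ \emph{is said to be} a PDS kernel if \dots'') and its justification is explicitly deferred to the cited reference \cite{mohri2018book}, so there is no in-paper proof to match against. What you have proved is also not quite the lemma as stated but a particular instance of the criterion, namely that any kernel admitting the inner-product representation of Definition~1 has SPSD Gram matrices; the lemma itself makes no assumption that $K$ arises from a feature map. That instance is, however, exactly what the paper needs, and it is carried out inside the proof of Proposition~1, where the Gram matrix is written as $\mathbf{K}=\mathbf{\Phi}\mathbf{\Phi}^{\top}$ and the quadratic form is evaluated as $\mathbf{x}^{\top}\mathbf{K}\mathbf{x}=\left\|\mathbf{\Phi}^{\top}\mathbf{x}\right\|^{2}\ge 0$; your double-sum expansion is the componentwise form of the same one-line argument, so the two routes coincide in substance. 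Your closing observation correctly locates the genuinely nontrivial content in the converse direction (SPSD Gram matrices imply the existence of a feature map and an RKHS), but that is the paper's Lemma~2, likewise deferred to \cite{mohri2018book}, so only the sketch you give is expected there.
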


\begin{lemma}
Let $K: \mathcal{X} \times \mathcal{X} \to \mathbb{R}$ be a PDS kernel. Then, there exists a Hilbert space $ \mathcal{H}$ and a mapping $\phi \left(  \cdot  \right)$ from $ \mathcal{X}$ to $ \mathcal{H}$ such that:
\begin{center}$\forall \mathbf{x},\mathbf{x}' \in \mathcal{X},{\kern 1pt} {\kern 1pt} {\kern 1pt} {\kern 1pt} {\kern 1pt} {\kern 1pt} K\left( {\mathbf{x},\mathbf{x}'} \right) = \left\langle {\phi \left(\mathbf{x} \right),\phi \left( {\mathbf{x}'} \right)} \right\rangle.$ \end{center}
Furthermore, $ \mathcal{H}$ is called a RKHS associated with $K$ if it has the following reproducing property:
\begin{center}$\forall h \in \mathcal{H},{\kern 1pt} \forall \mathbf{x} \in \mathcal{X},{\kern 1pt} {\kern 1pt} {\kern 1pt} {\kern 1pt} {\kern 1pt} {\kern 1pt} {\kern 1pt} h\left( x \right) = \left\langle {h,K\left( {\mathbf{x}, \cdot } \right)} \right\rangle . $ \end{center}
\end{lemma}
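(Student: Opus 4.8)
The plan is to carry out the classical Moore--Aronszajn construction, building the Hilbert space directly out of the kernel itself. First I would define the candidate feature map by setting $\phi(\mathbf{x}) = K(\mathbf{x}, \cdot)$, so that each input point is mapped to the one-argument function obtained by fixing the first slot of the kernel. I would then form the pre-Hilbert space $\mathcal{H}_0$ as the set of all finite linear combinations $f = \sum_i a_i K(\mathbf{x}_i, \cdot)$, which is clearly a real vector space of functions on $\mathcal{X}$, and which contains every $\phi(\mathbf{x})$ by construction.

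The central object is the bilinear form defined on $\mathcal{H}_0$ by
\begin{equation}
\left\langle \sum_i a_i K(\mathbf{x}_i, \cdot), \sum_j b_j K(\mathbf{y}_j, \cdot) \right\rangle = \sum_{i,j} a_i b_j K(\mathbf{x}_i, \mathbf{y}_j).
\end{equation}
I would first check that this value is independent of the chosen representations, which follows by rewriting it as $\sum_j b_j f(\mathbf{y}_j) = \sum_i a_i g(\mathbf{x}_i)$ in terms of the function values. Symmetry and bilinearity are immediate from the symmetry of $K$ and the form of the definition, and positive semi-definiteness, $\langle f, f\rangle = \sum_{i,j} a_i a_j K(\mathbf{x}_i, \mathbf{x}_j) \geq 0$, is precisely the assertion that the Gram matrix is SPSD, which holds because $K$ is a PDS kernel by Lemma 1. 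At this stage the reproducing property on $\mathcal{H}_0$ drops out for free: taking $g = K(\mathbf{x}, \cdot)$ yields $\langle f, K(\mathbf{x}, \cdot)\rangle = \sum_i a_i K(\mathbf{x}_i, \mathbf{x}) = f(\mathbf{x})$, and the target identity $K(\mathbf{x}, \mathbf{x}') = \langle \phi(\mathbf{x}), \phi(\mathbf{x}')\rangle$ is just the definition applied to single-term combinations.

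The step requiring the most care is upgrading positive semi-definiteness to a genuine inner product, i.e. showing $\langle f, f\rangle = 0$ forces $f \equiv 0$. The key is that the Cauchy--Schwarz inequality already holds for any positive semi-definite symmetric bilinear form, so combining it with the reproducing property gives $|f(\mathbf{x})|^2 = |\langle f, K(\mathbf{x}, \cdot)\rangle|^2 \leq \langle f, f\rangle\, K(\mathbf{x}, \mathbf{x})$; if $\langle f, f\rangle = 0$ then $f(\mathbf{x}) = 0$ for every $\mathbf{x}$, so $f$ is the zero function, as required.

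Finally I would take $\mathcal{H}$ to be the completion of $\mathcal{H}_0$ under the induced norm. The remaining obstacle, and the only genuinely technical point, is to verify that this abstract completion can still be realized as a space of functions on $\mathcal{X}$ and that the reproducing property survives passage to the limit. Here I would again invoke the Cauchy--Schwarz estimate above to see that each evaluation functional $f \mapsto f(\mathbf{x})$ is bounded, so that Cauchy sequences in $\mathcal{H}_0$ converge pointwise to well-defined limit functions; continuity of the inner product then extends $h(\mathbf{x}) = \langle h, K(\mathbf{x}, \cdot)\rangle$ from $\mathcal{H}_0$ to all of $\mathcal{H}$. This delivers both the mapping $\phi$ that realizes $K$ as an inner product and the reproducing property characterizing $\mathcal{H}$ as the RKHS associated with $K$.
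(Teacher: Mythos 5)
Your construction is correct and is precisely the classical Moore--Aronszajn argument. Note that the paper does not prove this lemma itself---it explicitly defers to the cited reference (Mohri, Rostamizadeh, and Talwalkar, \emph{Foundations of Machine Learning})---and the proof given there is the same one you outline: the pre-Hilbert space of finite combinations $\sum_i a_i K(\mathbf{x}_i,\cdot)$, the induced bilinear form with well-definedness checked via function values, the Cauchy--Schwarz plus reproducing-property argument upgrading positive semi-definiteness to definiteness, and completion justified by the boundedness of the evaluation functionals.
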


% DAE-FE
\begin{figure}[!t]
	\setlength{\abovecaptionskip}{0pt}
	\setlength{\belowcaptionskip}{0pt}
	\centering
	\includegraphics[width=0.99\columnwidth]{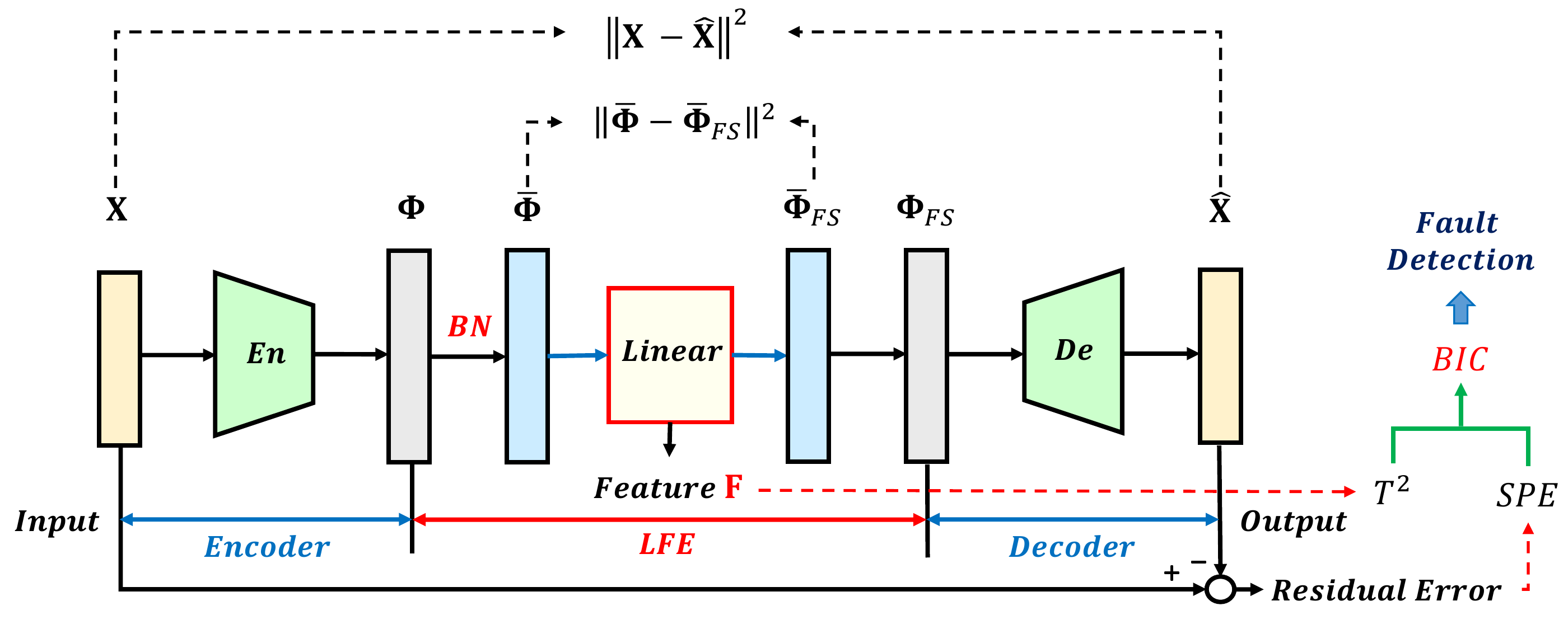}
	\caption{The whole network structure of DAE-FE framework.}
	\label{fig:DAE-FE}
\end{figure}

\textbf{\emph{Remark 1:}} Definition 1 gives the definition of the kernel function. The sufficient condition from Lemma 1 is a vital criterion to prove whether a function can realize to construct a corresponding kernel function. When Lemma 1 is satisfied, Lemma 2 tells that a RHKS associated with $K$ must be exist.

The above preliminaries provide the theoretical basis to further prove why the function ${En\left(  \cdot  \right)}$ from the encoder module is equivalent to a learnable function whose inner product is a kernel function. Now, Proposition 1 is advanced as:

\begin{proposition}The function $En\left(  \cdot  \right)$ from the encoder module is a nonlinear mapping whose inner product is a learnable kernel function. In addition, the space $ \mathcal{H}$ associated with this learnable kernel is a RKHS.\end{proposition}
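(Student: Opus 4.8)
The plan is to verify directly that the inner product of the encoder output satisfies Definition 1, and then to invoke Lemmas 1 and 2 to obtain the associated RKHS. First I would define the candidate kernel
\begin{equation}
K\left( \mathbf{x},\mathbf{x}' \right) = \left\langle En\left( \mathbf{x} \right), En\left( \mathbf{x}' \right) \right\rangle = En\left( \mathbf{x} \right)^{\top} En\left( \mathbf{x}' \right),
\end{equation}
and identify the feature map as $\phi = En$, so that the encoder itself plays the role of the nonlinear mapping $\phi\left( \cdot \right)$ appearing in Definition 1. With this identification $K$ is, by construction, of the required form $\left\langle \phi\left( \mathbf{x} \right),\phi\left( \mathbf{x}' \right) \right\rangle$, so the remaining task is to confirm that $K$ is in fact a PDS kernel and hence admits an associated RKHS.

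Next I would check the three hypotheses of Lemma 1. Symmetry is immediate because the real inner product is commutative, giving $K\left( \mathbf{x},\mathbf{x}' \right) = K\left( \mathbf{x}',\mathbf{x} \right)$. Continuity follows because $En\left( \cdot \right)$ is a composition of affine maps and continuous activation functions, hence continuous, so the bilinear expression $K$ is continuous as well. The core step is the positive semidefiniteness of the Gram matrix: writing $\mathbf{\Phi} = En\left( \mathbf{X} \right) = \left[ \phi\left( \mathbf{x}_1 \right),\ldots,\phi\left( \mathbf{x}_N \right) \right]^{\top}$, the Gram matrix factors as $\mathbf{K} = \mathbf{\Phi}\mathbf{\Phi}^{\top}$, so that for any $\mathbf{v} \in \mathbb{R}^N$ one has $\mathbf{v}^{\top}\mathbf{K}\mathbf{v} = \left\| \mathbf{\Phi}^{\top}\mathbf{v} \right\|_F^2 \ge 0$, which shows $\mathbf{K}$ is SPSD. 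By Lemma 1, $K$ is therefore a PDS kernel, and Lemma 2 then guarantees the existence of a Hilbert space $\mathcal{H}$ with the reproducing property associated with $K$, establishing the second assertion of the proposition.

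To justify the word \emph{learnable}, I would observe that $En\left( \cdot \right)$ is parameterized by the network weights, which are updated through back-propagation; consequently the induced kernel $K$ varies continuously with these weights, so the family of kernels realizable by the encoder is searched automatically during training rather than being fixed a priori, in contrast to the manual selection required by conventional KPCA.

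I do not anticipate a serious technical obstacle, since the argument is essentially a direct verification driven by the factorization $\mathbf{K} = \mathbf{\Phi}\mathbf{\Phi}^{\top}$. The one point requiring genuine care is the continuity hypothesis of Lemma 1: it implicitly demands that the activation functions be continuous, which I would record explicitly as a mild standing assumption (satisfied by the usual sigmoid, tanh, and ReLU choices). A secondary subtlety worth flagging is the nominal dimensional mismatch between the $N$-dimensional domain stated in Definition 1 and the $m$-dimensional samples $\mathbf{x}_i$; this is purely notational and does not affect the factorization or the SPSD argument.
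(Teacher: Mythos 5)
Your proposal is correct and follows essentially the same route as the paper's own proof: define $K$ as the inner product of encoder outputs, establish continuity and symmetry, show the Gram matrix is SPSD via the factorization $\mathbf{K} = \mathbf{\Phi}\mathbf{\Phi}^{\top}$, and then invoke Lemmas 1 and 2 to conclude $K$ is a PDS kernel with an associated RKHS. Your explicit remarks on symmetry, the continuity requirement on the activation functions, and the dimensional notation in Definition 1 are slightly more careful than the paper's version but do not change the argument.
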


\begin{proof} In the encoder module, the nonlinear activation functions are real-valued continuous functions and it only contain linear operators except for activation functions. Thus, as their composite functions, ${En\left(  \cdot  \right)}$ is a real-valued continuous function. According to Definition 1, a continuous and symmetric kernel $K$ can be expressed by
\begin{equation}
	K = \left\langle {En\left(  \cdot  \right),En\left(  \cdot  \right)} \right\rangle  = En{\left(  \cdot  \right)^{\top}}En\left(  \cdot  \right).
\end{equation}
The input $\mathbf{X}$ is a matrix defined in the real domain. $\mathbf{\Phi}  = En\left(\mathbf{X} \right)$ is also in the real domain. Thus, $\mathbf{K}$ is the Gram matrix with respect to $K$ and $\mathbf{X}$ can be calculated by
\begin{equation}
	\mathbf{K} = \mathbf{\Phi} {\mathbf{\Phi} ^{\top}}.
\label{eq:K}
\end{equation}
From (\ref{eq:K}), $\mathbf{K}$ is a real symmetric matrix. For any non-zero vector $\mathbf{x}\in {\mathbb{R}^N}$, it holds that
\begin{equation}
{\mathbf{x}^{\top}}\mathbf{K}\mathbf{x} = {\mathbf{x}^{\top}}\mathbf{\Phi} {\mathbf{\Phi} ^{\top}}\mathbf{x} = {\left( {{\mathbf{\Phi} ^{\top}}\mathbf{x}} \right)^{\top}}\left( {{\mathbf{\Phi} ^{\top}}\mathbf{x}} \right) \ge \mathbf{0}.
\end{equation}
Thus, $\mathbf{K}$ is SPSD. According to Lemma 1, the kernel $K$ is PDS kernel. Owing to Lemma 2, since $K$ is a PSD kernel, the space $\mathcal{H}$ is a Hilbert space, and more specifically, a RKHS.\end{proof}

From Proposition 1, it can be seen that $En\left(  \cdot  \right)$ is in essence a nonlinear mapping function, of which the inner product is a learnable kernel function. Different neural network structures and parameters will correspond to a different $En\left(  \cdot  \right)$, and thus to a different kernel function. During training, the neural network executes back-propagation through loss function. When the training is over, a kernel function will be determined and can meet the requirements of the task.

\subsubsection{LFE Module}
The major role of this module is to achieve linear feature extraction or dimensionality reduction for $\mathbf{\Phi} $.

In general, the input of LFE methods (such as PCA) needs to be standardized in each dimension, that is, the zero-mean and unit variance. Its main purpose is as follows: on one hand, the data of each dimension can be transformed to the same scale and pulled to the same baseline (origin of coordinates), so as to guarantee that each dimension has equal importance. On another hand, it is beneficial to the convergence of the gradient descent method.

To standardize $\mathbf{\Phi} $ in $\mathcal{H}$ that is as the output of the encoder module, we introduce the Batch Normalization (BN) trick \cite{ioffe2015icml} into the encoder module. Two trainable parameters in BN trick require to be fixed as $\gamma  = 1$ and $\beta  = 0$. In specific, $\mathbf{\bar \Phi} $ can be expressed as
\begin{equation}
\mathbf{\bar \Phi}  = BN\left( \mathbf{\Phi}  \right) = \frac{{\mathbf{\Phi}  - \mu \left( \mathbf{\Phi}  \right)}}{{\sigma \left( \mathbf{\Phi}  \right)}},
\end{equation}
where $\mu \left( \mathbf{\Phi}  \right)$ and $\sigma \left( \mathbf{\Phi}  \right)$ are the mean and standard deviation of $\mathbf{\Phi}$, respectively. Thus,
\begin{equation}
	\frac{1}{N}\mathbf{1}_N^{\top}\mathbf{\bar \Phi}  = \mathbf{0}.
\label{eq:phimean}
\end{equation}
It should be noted that the kernel method just executes centralization for $\mathbf{\Phi} $, because $\mathbf{\Phi} $ in (\ref{eq:He}) is unknown, its variance cannot be processed to 1. Compared with the centralization in kernel method for $\mathbf{\Phi} $, our standardized treatment has better performance in data preprocessing.

The following network structure is designed in accordance with the given LFE method. The linear dimensionality reduction of $\mathbf{\bar \Phi} $ is first performed to obtain the system feature $\mathbf{F}$. Then, $\mathbf{\bar \Phi}$ in $\mathcal{H}$ is reconstructed as $\mathbf{\bar \Phi} _{FS} $ by $\mathbf{F}$ under the constraint of loss function $los{s_\mathbf{\Phi} }$, which is expressed as:
\begin{equation}
	los{s_\mathbf{\Phi} } = \left\| {\mathbf{\bar \Phi}  - \mathbf{\bar \Phi}_{FS} } \right\|_F^2 + {\lambda _\mathbf{\Phi} }{\Omega _\mathbf{ \Phi} },
\end{equation}
where ${\lambda _\mathbf{\Phi} }$ is a non-negative hyper-parameter, and ${\Omega _\mathbf{\Phi} }$ is the regularization term in $ los{s_\mathbf{\Phi} }$.

Further, $\mathbf{\bar \Phi}$ is decomposed into the following parts:
\begin{equation}
\mathbf{\bar \Phi}  = \mathbf{\bar \Phi}_{FS}  + \mathbf{\bar \Phi}_{RS},
\end{equation}
\begin{equation}
\mathbf{\bar \Phi}_{FS}  = {f_1}\left(\mathbf{F} \right),
\end{equation}
\begin{equation}
\mathbf{F} = {f_2}\left( {\mathbf{\bar \Phi} } \right),
\end{equation}
where $\mathbf{\bar \Phi}_{RS} $ denotes the residual part of $\mathbf{\bar \Phi} $, and ${f_1}\left(  \cdot  \right)$ and ${f_2}\left(  \cdot  \right)$ are both linear functions. The neural network from $\mathbf{\bar \Phi} _{FS} $ to $\mathbf{\Phi} _{FS} $ is a fully connected layer that is viewed as the inverse process of BN trick.

\subsubsection{Decoder Module}
It is designed for the purpose of mapping the output $\mathbf{\Phi} _{FS} $ of LFE module from $\mathcal{H}$ back to $\mathcal{X}$, and reconstruct the original input $\mathbf{X}$ under the constraint of the loss function $los{s_{\mathbf{X}}}$ as
\begin{equation}
	los{s_{\mathbf{X}}} = \left\| {\mathbf{X} - \mathbf{\hat X}} \right\|_F^2.
\end{equation}

Since the decoder network structure is almost symmetric with the encoder, it can be regarded as the inverse process of the encoder and effectively realizes the self-supervised learning for $\mathbf{X}$.

Through the above descriptions of DAE-FE framework, the entire loss function can be expressed as
\begin{equation}
\begin{array}{l}
	loss = {\lambda _1}los{s_{\mathbf{X}}} + {\lambda _2}los{s_\mathbf{\Phi} } + {\lambda _3}\Omega,
\end{array}
\label{eq:loss}
\end{equation}
where ${\lambda _i}\left( {i = 1,2,3} \right)$ is a non-negative hyper-parameter, and $\Omega $ is the regularization term for a specific to a fault detection task. According to (\ref{eq:loss}), all three modules of DAE-FE are jointly trained through backpropagation. After training DAE-FE network, we can use $\mathbf{F}$ to monitor $\mathcal{FS}$ and $\mathbf{\tilde X}$ to monitor $\mathcal{RS}$.

It is worth noticing that as the LFE module can be designed according to a specific LFE method, we can leverage DAE-FE as a learnable kernel method to convert any LFE method to the nonlinear version.

\subsection{DAE-PCA Method}
In order to achieve the function of PCA, the matrix $\mathbf{P}$ learned by the neural network must be orthogonal. In other words, hard constraint is necessary to guarantee the orthogonality during the training process. We propose to solve this issue with the aid of Cayley Transform. Moreover, in order to improve the sensitivity of the test statistics based on the feature variances, we provide a new regularization term to restrict the variances of the features (in the fault-free condition). Our DAE-PCA method is illustrated in Fig. \ref{fig:DAE-PCA}.

\subsubsection{PCA Module}
For this module, the optimization problem can be described like (\ref{eq:KPCA}) of KPCA, i.e.,
\begin{equation}
	\begin{array}{l}
		\mathop {\min }\limits_\mathbf{P} \left\| {\mathbf{\bar \Phi}  - \mathbf{\bar \Phi}\mathbf{P}{\mathbf{P}^{\top}}} \right\|_F^2\\
		s.t.{\kern 1pt} {\kern 1pt} {\kern 1pt} {\mathbf{P}^{\top}}\mathbf{P} = {\mathbf{I}_a}.
	\end{array}
\end{equation}

Note that the input of the PCA module is obtained by the encoder module, rather than kernel trick.

% DAE-PCA
\begin{figure*}[!t]
	\setlength{\abovecaptionskip}{0pt}
	\setlength{\belowcaptionskip}{0pt}
	\centering
	\includegraphics[width=0.6\textwidth]{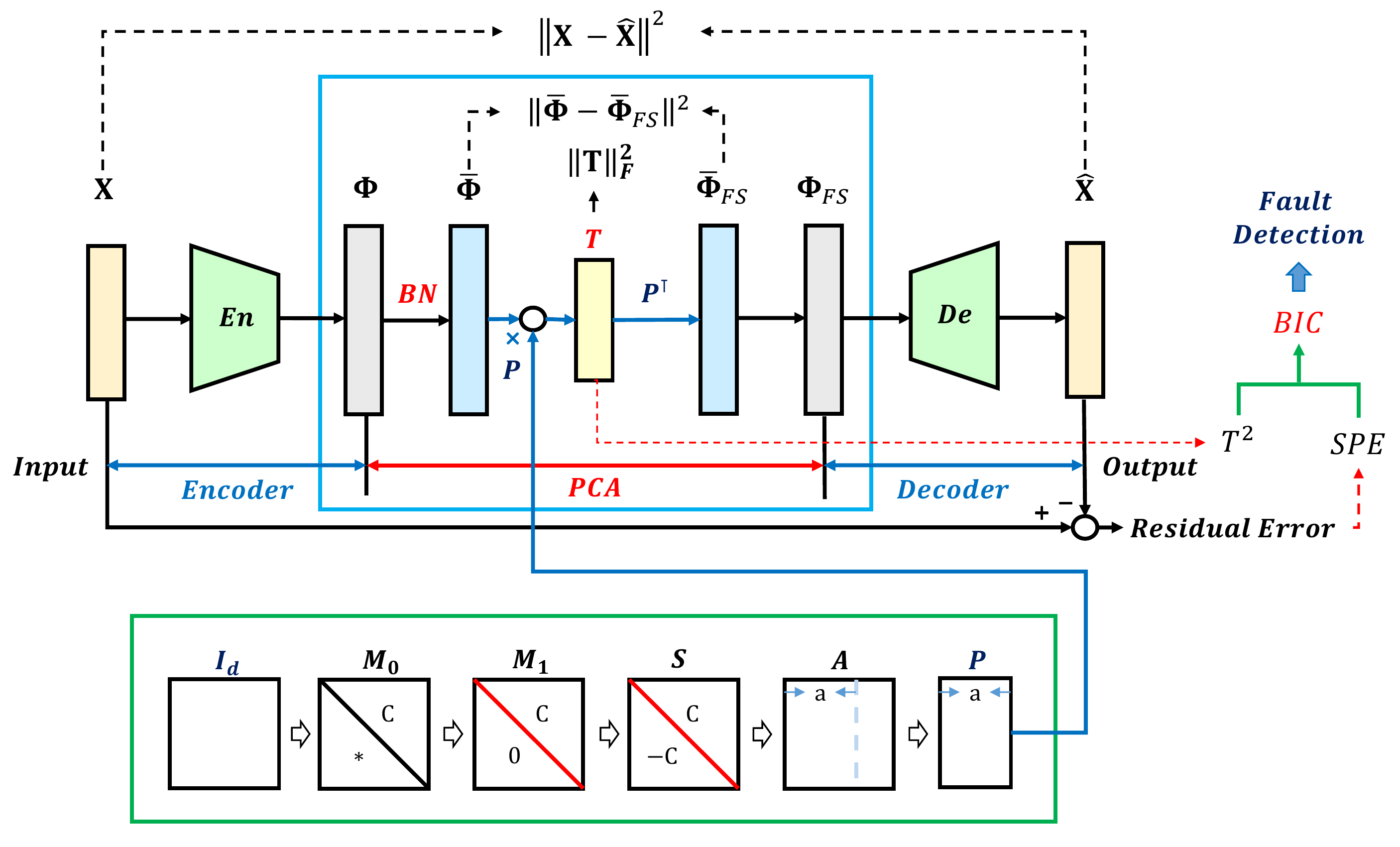}
	\caption{The whole network structure of DAE-PCA, where the PCA module is shown with a solid blue border and the acquisition of $\mathbf{P}$ is in a solid green border.}
	\label{fig:DAE-PCA}
\end{figure*}

Let the loss function $los{s_{\mathbf{\Phi} \_\rm{PCA}}}$ of PCA module be $los{s_\mathbf{\Phi} }$ without ${\Omega _\mathbf{\Phi} }$:
\begin{equation}
	los{s_{\mathbf{\Phi} \_\rm{PCA}}} = \left\| {\mathbf{\bar \Phi}  - {{{\mathbf{\bar \Phi} }_{FS}}} } \right\|_F^2 = \left\| {\mathbf{\bar \Phi}  - \mathbf{\bar \Phi} \mathbf{P}{\mathbf{P}^{\top}}} \right\|_F^2.
\end{equation}
Thus, in DAE-PCA, training the network to minimize $los{s_{\mathbf{\Phi} \_\rm{PCA}}}$ is equivalent to $\mathop {\min }\limits_\mathbf{P} \left\| {\mathbf{\bar \Phi}  - \mathbf{\bar \Phi}  \mathbf{P}{\mathbf{P}^{\top}}} \right\|_F^2$. Thus, the system feature $\mathbf{T}$ is obtained by
\begin{equation}
\mathbf{T} = \mathbf{\bar \Phi}\mathbf{ P}.
\label{eq:T}
\end{equation}

At the same time, we need to restrict $\mathbf{P}$ to satisfy the constraint that ${\mathbf{P}^{\top}}\mathbf{P} = {\mathbf{I}_a}$. For such purpose, we devise the neural network based on Cayley Transform to acquire $\mathbf{P}$. Cayley Transform was first established by Arthur Cayley, one proposition of which is given as follows \cite{cayley1846}.
\begin{proposition}If $\mathbf{S}$ is a real antisymmetric matrix and $\mathbf{I}$ is the identity matrix, then $\mathbf{A}=\left( {\mathbf{I} - \mathbf{S}} \right){\left( {\mathbf{I} + \mathbf{S}} \right)^{ - 1}}$ is an orthonormal matrix.	\end{proposition}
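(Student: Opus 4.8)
The plan is to prove that $\mathbf{A}$ is orthonormal by verifying $\mathbf{A}^{\top}\mathbf{A}=\mathbf{I}$ directly, after first confirming that the defining expression even makes sense, i.e., that $\mathbf{I}+\mathbf{S}$ is invertible. Throughout I would use only the defining property of a real antisymmetric matrix, $\mathbf{S}^{\top}=-\mathbf{S}$, together with elementary facts about transposes and inverses.

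First I would establish that $\mathbf{I}+\mathbf{S}$ is nonsingular, which is the one place a genuine argument is required. Suppose $\left( {\mathbf{I}+\mathbf{S}} \right)\mathbf{v}=\mathbf{0}$ for some real vector $\mathbf{v}$. Left-multiplying by $\mathbf{v}^{\top}$ gives $\mathbf{v}^{\top}\mathbf{v}+\mathbf{v}^{\top}\mathbf{S}\mathbf{v}=0$. Skew-symmetry forces the quadratic form to vanish, since $\mathbf{v}^{\top}\mathbf{S}\mathbf{v}=\left( {\mathbf{v}^{\top}\mathbf{S}\mathbf{v}} \right)^{\top}=\mathbf{v}^{\top}\mathbf{S}^{\top}\mathbf{v}=-\mathbf{v}^{\top}\mathbf{S}\mathbf{v}$, whence $\mathbf{v}^{\top}\mathbf{v}=0$ and $\mathbf{v}=\mathbf{0}$. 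Thus $\mathbf{I}+\mathbf{S}$ has trivial kernel, $\left( {\mathbf{I}+\mathbf{S}} \right)^{-1}$ exists, and $\mathbf{A}$ is well defined.

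Next I would compute and simplify $\mathbf{A}^{\top}$. Using $\left( {\mathbf{I}-\mathbf{S}} \right)^{\top}=\mathbf{I}+\mathbf{S}$ and the identity $\left( {\mathbf{M}^{-1}} \right)^{\top}=\left( {\mathbf{M}^{\top}} \right)^{-1}$, one obtains $\mathbf{A}^{\top}=\left( {\mathbf{I}-\mathbf{S}} \right)^{-1}\left( {\mathbf{I}+\mathbf{S}} \right)$, so that $\mathbf{A}^{\top}\mathbf{A}=\left( {\mathbf{I}-\mathbf{S}} \right)^{-1}\left( {\mathbf{I}+\mathbf{S}} \right)\left( {\mathbf{I}-\mathbf{S}} \right)\left( {\mathbf{I}+\mathbf{S}} \right)^{-1}$. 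The decisive observation is that $\mathbf{I}+\mathbf{S}$ and $\mathbf{I}-\mathbf{S}$ commute, since both are polynomials in $\mathbf{S}$; explicitly $\left( {\mathbf{I}+\mathbf{S}} \right)\left( {\mathbf{I}-\mathbf{S}} \right)=\mathbf{I}-\mathbf{S}^{2}=\left( {\mathbf{I}-\mathbf{S}} \right)\left( {\mathbf{I}+\mathbf{S}} \right)$. Swapping the two middle factors collapses the product to $\left( {\mathbf{I}-\mathbf{S}} \right)^{-1}\left( {\mathbf{I}-\mathbf{S}} \right)\left( {\mathbf{I}+\mathbf{S}} \right)\left( {\mathbf{I}+\mathbf{S}} \right)^{-1}=\mathbf{I}$, giving $\mathbf{A}^{\top}\mathbf{A}=\mathbf{I}$, and a symmetric computation yields $\mathbf{A}\mathbf{A}^{\top}=\mathbf{I}$ as well.

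I expect the only subtle point to be the invertibility of $\mathbf{I}+\mathbf{S}$, which relies on the vanishing of the skew quadratic form; the remainder is routine algebra once the commutativity of $\mathbf{I}\pm\mathbf{S}$ is noted. The transpose simplification and the cancellation then follow mechanically and need no further justification.
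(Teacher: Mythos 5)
Your proof is correct. Note, however, that the paper itself offers no proof of this proposition at all: it simply states the result and cites Cayley's 1846 paper, treating it as a known classical fact. So there is no in-paper argument to compare against; your write-up supplies the standard self-contained derivation. The two substantive points are both handled properly: you justify the invertibility of $\mathbf{I}+\mathbf{S}$ via the vanishing of the skew quadratic form $\mathbf{v}^{\top}\mathbf{S}\mathbf{v}$ (a step that is frequently omitted but genuinely necessary for $\mathbf{A}$ to be well defined), and you use the commutativity of $\mathbf{I}+\mathbf{S}$ and $\mathbf{I}-\mathbf{S}$ as polynomials in $\mathbf{S}$ to collapse $\mathbf{A}^{\top}\mathbf{A}$ to the identity. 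One cosmetic remark: the same kernel argument shows $\mathbf{I}-\mathbf{S}$ is also invertible (since $-\mathbf{S}$ is likewise antisymmetric), which is implicitly used when you write $\left(\mathbf{I}-\mathbf{S}\right)^{-1}$ in the expression for $\mathbf{A}^{\top}$; it would be worth one clause to say so explicitly.
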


$\mathbf{P}$ can be directly obtained by the network structure as shown in the solid green border in Fig. \ref{fig:DAE-PCA}, of which the input is fixed to the identity matrix ${I_{d \times d}}$ and the output is an orthogonal matrix $\mathbf{P}$ that is acquired by the neural networks restricted based on Cayley Transform. Through training the entire DAE-PCA network with back-propagation mechanism, an optimal $P$ suitable for the PCA module can be learned. The detailed descriptions for acquiring $\mathbf{P}$ in solid green border are in the following steps:
\begin{enumerate}
\item[1)] Fix the input to be identity matrix $\mathbf{I} \in {\mathbb{R}^{d \times d}}$ and initialize a square matrix ${\mathbf{M}_0} \in {\mathbb{R}^{d \times d}}$. Thus, ${\mathbf{M}_0} \in {\mathbb{R}^{d \times d}}$ is obtained by ${\mathbf{M}_0} = {\mathbf{M}_0}\mathbf{I}$;
	
\item[2)] Get the upper triangular matrix ${\mathbf{M}_1} \in {\mathbb{R}^{d \times d}}$ of ${\mathbf{M}_0}$;
	
\item[3)] Obtain the matrix $\mathbf{S} \in {\mathbb{R}^{d \times d}}$ by $\mathbf{S} = {\mathbf{M}_1} - {\mathbf{M}_{1}^{\top}}$;
	
\item[4)] According to Proposition 2, obtain an orthogonal square $\mathbf{A} \in {\mathbb{R}^{d \times d}}$ by $\mathbf{A} = \left( {\mathbf{I} - \mathbf{S}} \right){\left( {\mathbf{I} + \mathbf{S}} \right)^{ - 1}}$;

\item[5)] Take the first $a$ orthogonal column vectors of $\mathbf{A}$ to get the orthogonal projection matrix $\mathbf{P} \in {\mathbb{R}^{d \times a}}$, where $a$ is similar to the principle component number in PCA;

\item[6)] The most optimal projection matrix $\mathbf{P}$ for the PCA module will be acquired by back-propagation mechanism.
	
\end{enumerate}
%\begin{itemize}
%\item Randomly initialize a square matrix ${\mathbf{M}_0} \in {\mathbb{R}^{d \times d}}$.
%
%\item Get the upper triangular matrix ${\mathbf{M}_1} \in {\mathbb{R}^{d \times d}}$ of ${\mathbf{M}_0}$.
%
%\item Obtain the matrix $\mathbf{S} \in {\mathbb{R}^{d \times d}}$ by $\mathbf{S} = {\mathbf{M}_1} - {\mathbf{M}_1}^{\top}$.
%
%\item According to 1, obtain an orthogonal square $\mathbf{A} \in {\mathbb{R}^{d \times d}}$ by $\mathbf{A} = \left( {\mathbf{I} - \mathbf{S}} \right){\left( {\mathbf{I} + \mathbf{S}} \right)^{ - 1}}$
%
%\item Take the first $a$ orthogonal column vectors of $\mathbf{A}$ to get the orthogonal projection matrix $\mathbf{P} \in {\mathbb{R}^{d \times a}}$ which is utilized to the PCA module.
%\end{itemize}
Notice that learning for matrix $\mathbf{P}$ is in essence to learn about matrix $\mathbf{M}_0$. Once $\mathbf{M}_0$ is determined, $\mathbf{P}$ is also determined.

The above trick is implemented by a hard constraint, rather than a soft constraint by adding any regularization term about orthogonality, i.e., ${\Omega _\mathbf{\Phi}} {\rm{ = }}{\left\| {\mathbf{P}^{\top}}\mathbf{P} - {\mathbf{I}_a} \right\|_F^2}$. Compared with the soft constraint, the advantage of our hard constraint is that its orthogonal performance is always maintained and is not be affected by the design of the loss function.

Thus, the function $los{s_{\rm {DAE-PCA, 1}}}$ our DAE-PCA is expressed as
\begin{equation}
\begin{aligned}
	los{s_{\rm {DAE-PCA, 1}}} = {\lambda _1}los{s_ \mathbf{ X}} + {\lambda _2}los{s_{ \mathbf{ \Phi} \_{\rm{PCA}}}}\\
	={\lambda _1}{\left\| {\mathbf{X} - \mathbf{\hat X}} \right\|_F^2}+{\lambda _2}{\left\| {\mathbf{\bar \Phi}  - {{{\mathbf{\bar \Phi} }_{FS}}} } \right\|_F^2}.
\end{aligned}
\end{equation}
In addition, set ${\lambda _1} = \frac{1}{{N \times m}}$ and ${\lambda _2} = \frac{1}{{N \times d}}$ that aims to eliminate the effect of the number of matrix elements for each item of loss.

% DAE-PCA for fault detection
\begin{figure}[!t]
	\setlength{\abovecaptionskip}{0pt}
	\setlength{\belowcaptionskip}{0pt}
	\centering
	\includegraphics[width=0.98\columnwidth]{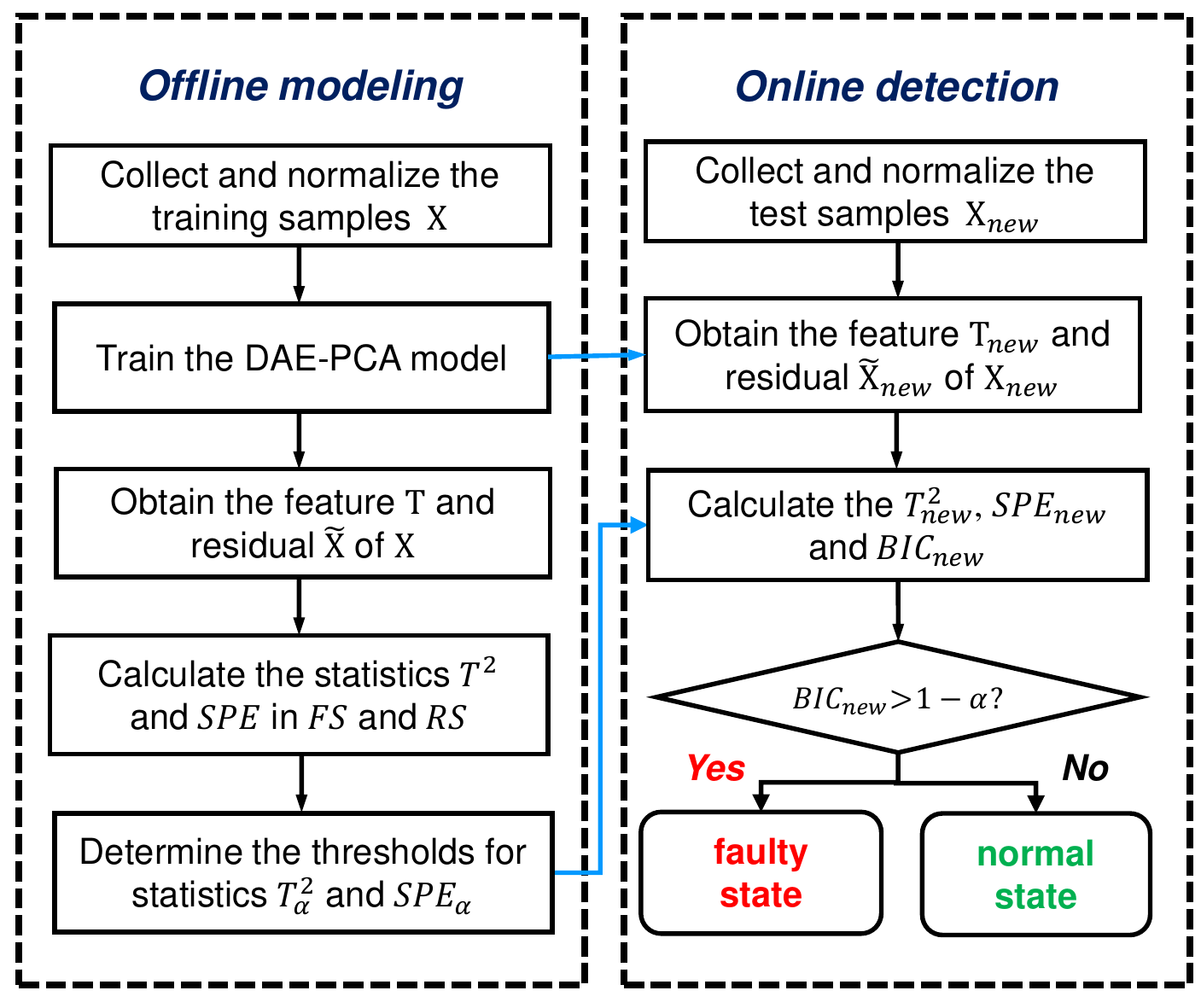}
	\caption{The flowchart of nonlinear fault detection based on DAE-PCA.}
	\label{fig:logic}
\end{figure}

\subsubsection{Regularization Term ${\Omega _\mathbf{T}}$} Set ${\mathbf{s}^i}\in {\mathbb{R}^N}$ to be the $i_{th}$ column vector of $\mathbf{T}$, and then $\mathbf{T} = \left[ {{\mathbf{s}^1},{\mathbf{s}^2}, \ldots ,{\mathbf{s}^a}} \right]$. Let $\mu \left( {{\mathbf{s}^i}} \right)$ and ${\sigma ^2} \left( {{\mathbf{s}^i}} \right)$ denote the mean and variance of $\mathbf{s}^i$.

According to (\ref{eq:phimean}) and (\ref{eq:T}), it holds that
\begin{equation}
\mu \left( \mathbf{T} \right)= \left[ {\mu \left( {{\mathbf{s}^1}} \right),\mu \left( {{\mathbf{s}^2}} \right), \ldots ,\mu \left( {{\mathbf{s}^a}} \right)} \right] \in {\mathbb{R}^{1 \times a}} = \mathbf{0},
\end{equation}
because
\begin{equation}
\mu \left( \mathbf{T} \right) = \frac{1}{N}\mathbf{1}_N^{\top}\mathbf{T} =\frac{1}{N}\mathbf{1}_N^{\top}\mathbf{\bar \Phi} \mathbf{P} = \left({\frac{1}{N}\mathbf{1}_N^{\top}\mathbf{\bar \Phi}} \right)\mathbf{P} = \mathbf{0}.
\end{equation}
Further, we have
\begin{equation}
{\sigma ^2}\left( {{\mathbf{s}^i}} \right) = \frac{1}{N}\sum\limits_{j = 1}^N {{{\left( {\mathbf{s}_j^i - \mu \left( {{\mathbf{s}^i}} \right)} \right)}^2}}  = \frac{1}{N}\sum\limits_{j = 1}^N {{{\left( {\mathbf{s}_j^i} \right)}^2}},
\end{equation}
\begin{equation}
\sum\limits_{i = 1}^a {{\sigma ^2}\left( {{\mathbf{s}^i}} \right)}  = \frac{1}{N}\sum\limits_{i = 1}^a {\sum\limits_{j = 1}^N {{{\left( {\mathbf{s}_j^i} \right)}^2}} }  = \frac{1}{N}\left\| \mathbf{T} \right\|_F^2,
\label{eq:sumvariance}
\end{equation}
where ${\mathbf{s}_j^i}$ represents the element of $\mathbf{T}$ corresponding to $i_{th}$ sensor and $j_{th}$ sample point.

Based on (\ref{eq:sumvariance}), we define a regularization item ${\Omega _\mathbf{T}}$:
\begin{equation}
{\Omega _\mathbf{T}} = \left\| \mathbf{T} \right\|_F^2 \propto \sum\limits_{i = 1}^a {{\sigma ^2}\left( {{\mathbf{s}^i}} \right)},
\end{equation}
which represents the sum of variance of $\mathbf{T}$ on all feature dimensions. The decrease of $\Omega _\mathbf{T}$ helps the system feature $\mathbf{T}$ to be more compact in space, so as to better distinguish the faulty samples from the normal samples.

Through the above analysis, the function $los{s_{\rm {DAE-PCA, 2}}}$ of our DAE-PCA with ${\Omega _\mathbf{T}}$ is defined as
\begin{equation}
\begin{aligned}
	los{s_{\rm {DAE-PCA, 2}}} = los{s_{\rm {DAE-PCA, 1}}}+{\lambda _3}{\Omega _\mathbf{T}}\\
	=los{s_{\rm {DAE-PCA, 1}}}+{\lambda _3}{\left\| \mathbf{T} \right\|_F^2}.
\end{aligned}
\end{equation}

\begin{figure*}[b]
{\noindent}	 \rule[-3pt]{17.5cm}{0.05em}
\begin{equation}
BIC\left( {{\mathbf{x}_{new}}} \right) = \frac{{{P_{{T^2}}}\left( {{\mathbf{x}_{new}}\left| F \right.} \right){P_{{T^2}}}\left( {F\left| {{\mathbf{x}_{new}}} \right.} \right) + {P_{SPE}}\left( {{\mathbf{x}_{new}}\left| F \right.} \right){P_{SPE}}\left( {F\left| {{\mathbf{x}_{new}}} \right.} \right)}}{{{P_{{T^2}}}\left( {{\mathbf{x}_{new}}\left| F \right.} \right) + {P_{SPE}}\left( {{\mathbf{x}_{new}}\left| F \right.} \right)}}.
\label{eq:BIC}
\end{equation}
\end{figure*}

\subsection{DAE-PCA Based Nonlinear Fault Detection}
To detect whether some faults occur in the system adopting DAE-PCA method, we employ the two most commonly used statistics in fault diagnosis community to monitor $\mathcal{PS}$ and $\mathcal{RS}$ respectively, i.e., $ {\rm{Hotelling's}}{\kern 1pt} {\kern 1pt} {\kern 1pt}{T^2}$ statistics and $\rm{SPE}$ statistics.

The flowchart of nonlinear fault detection based on DAE-PCA is summarized in Fig. \ref{fig:logic}. Given the training data matrix $ \mathbf{ X }\in { \mathbb{ R}^{N \times m}}$, according to (\ref{eq:T}) and (\ref{eq:xrs}), we have the system feature matrix $ \mathbf{ T }\in { \mathbb{ R}^{N \times a}}$ and residual matrix $ \mathbf{ \tilde X }\in { \mathbb{ R}^{N \times m}}$. Set $\mathbf{x} \in {\mathbb{R}^m}$ to be a sample of $ \mathbf{ X }$, its Hotelling's $ {T^2}$ statistics and squared prediction error ($SPE$) statistics can be defined as follows:
\begin{equation}
	{T^2}\left(\mathbf{x} \right) = {\mathbf{t}^{\top}} {\mathbf{\Lambda} }^{ - 1}\mathbf{t},
\end{equation}
\begin{equation}
	{SPE}\left(\mathbf{x} \right) = {\mathbf{\tilde x}^{\top}}\mathbf{\tilde x},
\end{equation}
where $\mathbf{t} \in {\mathbb{R}^a}$, $\mathbf{\tilde x} \in {\mathbb{R}^m}$ are the system feature vector and residual vector with respect to $\mathbf{x}$, and $\mathbf{\Lambda}  =  {\mathbf{T}{\mathbf{T}^{\top}}/\left( {N - 1} \right)}$ is the covariance matrix of $\mathbf{t}$.

We set the threshold of $ {T^2}$ statistics and $SPE$ statistics as ${J_{th,{T^2}}}$ and ${J_{th, {SPE}}}$ respectively, which are calculated by applying ${T^2}$ and ${\rm{SPE}}$ to the kernel probability estimation (KDE) scheme under the given confidence limit $\alpha $. The calculation method about KDE can be referred to \cite{parzen1962}.

For a test sample $\mathbf{x}_{new} \in {\mathbb{R}^m}$, let its system feature and residual be $\mathbf{t}_{new} \in {\mathbb{R}^m}$ and $\mathbf{\tilde x}_{new} \in {\mathbb{R}^m}$, respectively. Thus, $ {T^2}$ statistics and $\rm{SPE}$ statistics of $\mathbf{x}_{new}$ are computed by
\begin{equation}
	{T^2}\left(\mathbf{x}_{new} \right) = {\mathbf{t}_{new}^{\top}} {\mathbf{\Lambda} }^{ - 1}\mathbf{t}_{new},
\end{equation}
\begin{equation}
	{SPE}\left(\mathbf{x}_{new} \right) = {\mathbf{\tilde x}_{new}^{\top}}\mathbf{\tilde x}_{new}.
\end{equation}

In order to provide a clear detection logic, the statistics ${T^2}$ and $SPE$ are integrated into a statistics, Bayesian information criterion ($BIC$), according to the Bayesian inference \cite{ge2010jpc, huang2019tcst}. To simplify the descriptions, these two statistics and their thresholds are represented by two sets, $S{\rm{ = }}\left\{ {{T^2},SPE} \right\}$ and ${J_{th,S}} = \left\{ {{J_{th,{T^2},}}{J_{th,SPE}}} \right\}$ respectively, since the following derivations are the same for them.

The fault posterior probability of $S$ is calculated by
\begin{equation}
{P_S}\left( {F\left| \mathbf{x}_{new} \right.} \right) = \frac{{{P_S}\left( {\mathbf{x}_{new}\left| F \right.} \right){P_S}\left( F \right)}}{{{P_S}\left( \mathbf{x}_{new} \right)}},
\end{equation}
in which ${P_S}\left( \mathbf{x}_{new} \right)$ is computed by
\begin{equation}
{P_S}\left( {{\mathbf{x}_{new}}} \right) = {P_S}\left( {{\mathbf{x}_{new}}\left| N \right.} \right){P_S}\left( N \right) + {P_S}\left( {{\mathbf{x}_{new}}\left| F \right.} \right){P_S}\left( F \right)
\end{equation}
where $N$ and $F$ denote the normal and faulty state. The normal prior probability ${{P_S}\left( N \right)}$ equals to $\alpha $ and the faulty prior probability ${{P_S}\left( F \right)}$ is $1 - \alpha$. The normal and faulty conditional probability are defined as
\begin{equation}
\begin{array}{l}
{P_S}\left( {{{\mathbf{x}_{new}}}\left| N \right.} \right) = \exp \left\{ { - S/{J_{th,S}}} \right\}\\
{P_S}\left( {{{\mathbf{x}_{new}}}\left| F \right.} \right) = \exp \left\{ { - {J_{th,S}}/S} \right\}.
\end{array}
\end{equation}

Further, the statistics $BIC$ can be obtained in (\ref{eq:BIC}), which is adopted to monitor the full space that is called $\mathcal{FS}$. $\mathcal{FS}$ can be regarded as the space by combing $\mathcal{PS}$ with $\mathcal{RS}$. Thus, the detection logic in $\mathcal{FS}$ is expressed as follows:

\begin{equation}
\left\{ {\begin{array}{*{20}{l}}
{BIC\left( {{\mathbf{x}_{new}}} \right) \le 1 - \alpha  \Rightarrow {x_{new}}{\kern 1pt} {\kern 1pt} {\kern 1pt} {\rm is{\kern 1pt} {\kern 1pt} {\kern 1pt} a {\kern 1pt} {\kern 1pt} {\kern 1pt} fault-free{\kern 1pt} {\kern 1pt} {\kern 1pt} sample}}\\
{BIC\left( {{\mathbf{x}_{new}}} \right) > 1 - \alpha  \Rightarrow {x_{new}}{\kern 1pt} {\kern 1pt} {\kern 1pt} {\rm is{\kern 1pt} {\kern 1pt} {\kern 1pt} a {\kern 1pt} {\kern 1pt} {\kern 1pt} faulty{\kern 1pt} {\kern 1pt} {\kern 1pt} sample}}
\end{array}} \right.
\label{eq:dl}
\end{equation}

\section{Case Study}
To show the performance and the superiority of the proposed DAE-PCA, the TE process is employed in our experiment. In this section, the proposed method will be compared with several comparative methods to comprehensively validate the  performance in fault detection tasks.

\subsection{Simulation Setup}
\subsubsection{Benchmark}
The TE process is a real industrial benchmark which has been widely applied for the simulation and verification of process monitoring methods \cite{chiang2000}, and its diagram is
displayed in Fig. \ref{fig:TEP}. The TE process consists of two variable blocks: one is the XMV block composed by 11 manipulated variables and another is the XMEAS block that comprises 41 measured variables including 22 process variables and 19 analysis variables. Further information of the benchmark can be referred to \cite{downs1993} and the website \footnote{http://depts.washington.edu/control/LARRY/TE/download.html}.

% TEP
\begin{figure}[!bp]
	\setlength{\abovecaptionskip}{0pt}
	\setlength{\belowcaptionskip}{0pt}
	\centering
	\includegraphics[width=0.92\columnwidth]{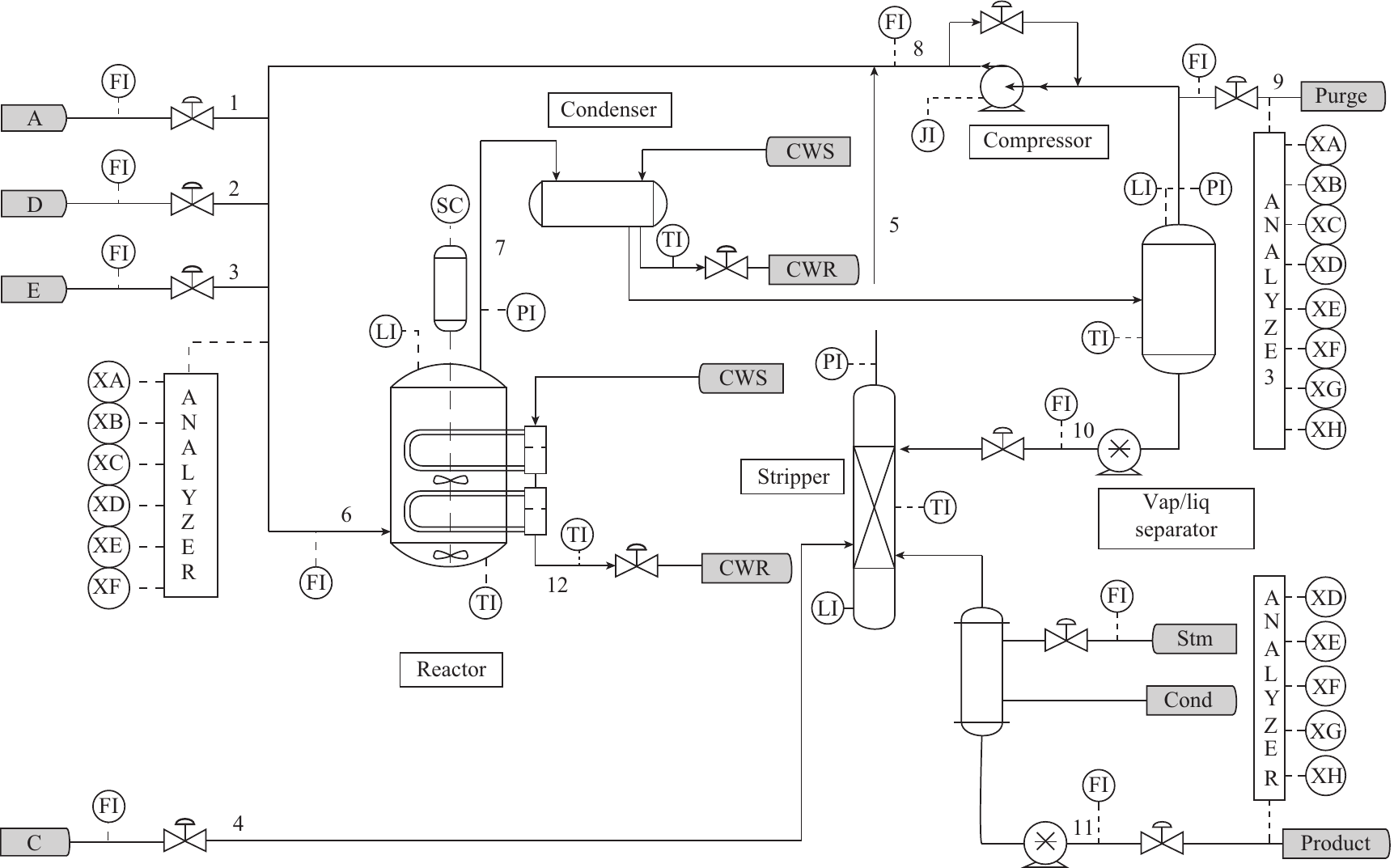}
	\caption{The diagram of the TE process.}
	\label{fig:TEP}
\end{figure}

% Fault types
%\renewcommand{\arraystretch}{1.2}
\begin{table}[!bp]
\centering
\caption{Fault types in the TE process.}
\scalebox{0.85}{
\begin{tabular}{lll}
\toprule
Fault& Description& Type\\
\midrule
IDV(1)& A/C Feedratio, B composition constant(Stream 4)& Step\\
IDV(2)& B composition, A/C ratio constant(Stream 4)& Step\\
IDV(3)& D feed temperature (Stream 2)& Step\\
IDV(4)& Reactor cooling water inlet temperature& Step\\
IDV(5)& Condenser cooling water inlet temperature& Step\\
IDV(6)& A feed loss (Stream 1)& Step\\
IDV(7)& C header pressure loss (Stream 4)& Step\\
IDV(8)& A, B, C feed composition (Stream 4)& Random variation\\
IDV(9)& D feed temperature (Stream 2)& Random variation\\
IDV(10)& C feed temperature (Stream 4)& Random variation\\
IDV(11)& Reactor cooling water inlet temperature& Random variation\\
IDV(12)& Condenser cooling water inlet temperature& Random variation\\
IDV(13)& Reactor kinetics& Slow drift\\
IDV(14)& Reactor cooling water valve& Sticking\\
IDV(15)& Condenser cooling water valve& Sticking\\
IDV(16)& Unknown&  Unknown\\
IDV(17)& Unknown&  Unknown\\
IDV(18)& Unknown&  Unknown\\
IDV(19)& Unknown&  Unknown\\
IDV(20)& Unknown&  Unknown\\
IDV(21)& Valve (Stream 4)& Constant position\\
\bottomrule
\end{tabular}}
\label{table:faulttypes}
\end{table}

% FDR
\renewcommand{\arraystretch}{1.09}
\begin{table*}[!htbp]
\centering
\fontsize{6}{7.5}\selectfont
\begin{threeparttable}
\caption{FDRs (\%) of two subspaces $ \mathcal{PS}$ and $ \mathcal{RS}$ as well as the full space $ \mathcal{FS}$ by KPCA, DAE, DAE-PCA-1 and DAE-PCA-2 in the TE process. In addition, the value in bold represents the best performance under the same fault and statistics for all methods. Avg. gives the average of the mean of statistics for the first two categories of faults.}\label{table:fdrresults}
\begin{tabular}{llllllllllllllll}
\toprule
\multicolumn{1}{c}{Fault}&\multicolumn{3}{c}{KPCA}&\multicolumn{3}{c}{DAE}&\multicolumn{3}{c}{DAE-PCA-1}&\multicolumn{3}{c}{DAE-PCA-2}\\
 \cmidrule(ll){2-4} \cmidrule(ll){5-7}\cmidrule(ll){8-10}\cmidrule(ll){11-13}
 No.   & $ \mathcal{PS}$& $ \mathcal{RS}$& $ \mathcal{FS}$& $ \mathcal{PS}$& $ \mathcal{RS}$& $ \mathcal{FS}$& $ \mathcal{PS}$& $ \mathcal{RS}$& $ \mathcal{FS}$& $ \mathcal{PS}$& $ \mathcal{RS}$& $ \mathcal{FS}$\cr
\midrule
1 &   100& 99.63& 99.88& 99.99 $\pm$ 0.06& 99.53 $\pm$ 0.20&  99.95$\pm$ 0.08&   100 $\pm$    0&    100$\pm$    0& 99.97 $\pm$ 0.06& 99.96 $\pm$ 0.06& 99.82 $\pm$ 0.08&  99.88$\pm$ 0.09\\
2 & 98.63& 98.50& 98.62 & 98.68 $\pm$ 0.12& 98.17 $\pm$ 0.36&  98.71$\pm$ 0.12& 98.65 $\pm$ 0.06&  98.76$\pm$ 0.09& 98.73 $\pm$ 0.09& 98.67 $\pm$ 0.11& 97.48 $\pm$ 0.66&  98.54$\pm$ 0.1 \\
6 &   100&   100&   100&   100 $\pm$    0&   100 $\pm$    0&    100$\pm$    0&   100 $\pm$    0&    100$\pm$    0&   100 $\pm$    0&   100 $\pm$    0&   100 $\pm$    0&    100$\pm$    0\\
7 &   100&   100&   100&  100  $\pm$    0& 99.82 $\pm$ 0.54&    100$\pm$    0&   100 $\pm$    0&    100$\pm$    0&   100 $\pm$    0&   100 $\pm$    0& 99.76 $\pm$ 1.42&    100$\pm$    0\\
8 & 99.00& 98.75& 99   & 98.38 $\pm$ 0.22& 97.53 $\pm$ 0.34&  98.37$\pm$ 0.21& 98.35 $\pm$ 0.19&  98.46$\pm$ 0.22& 98.36 $\pm$ 0.2 & 98.20 $\pm$ 0.09& 95.79 $\pm$ 1.36&  98.03$\pm$ 0.1 \\
12& 99.63& 99.75& 99.75& 99.62 $\pm$ 0.13& 99.74 $\pm$ 0.13&  99.82$\pm$ 0.08& 99.78 $\pm$ 0.10&  99.88$\pm$ 0.09& 99.85 $\pm$ 0.08& 99.87 $\pm$    0& 99.62 $\pm$ 0.20&  99.87$\pm$    0\\
13& 95.25& 94.88& 95.38& 95.35 $\pm$ 0.19& 94.62 $\pm$ 0.32&  95.39$\pm$ 0.17& 95.28 $\pm$ 0.09&   95.4$\pm$ 0.13& 95.39 $\pm$ 0.12& 95.26 $\pm$ 0.04& 95.21 $\pm$ 0.17&  95.28$\pm$ 0.06\\
14&   100&   100&   100&   100 $\pm$    0& 99.84 $\pm$ 0.27&    100$\pm$    0&   100 $\pm$    0&    100$\pm$    0&   100 $\pm$ 0.06&     0 $\pm$    0& 99.93 $\pm$ 0.08&    100$\pm$    0\\
17& 96.63& 92.50& 96.38 & 96.87 $\pm$ 1.25& 87.23 $\pm$ 3.25&  96.56$\pm$ 1.06& 97.15 $\pm$ 0.14&   97.2$\pm$ 0.16& 96.8  $\pm$ 0.24& 96.97 $\pm$ 0.38& 96.12 $\pm$ 0.62&  96.81$\pm$ 0.26\\
18& 90.75& 90.25& 90.62& 90.64 $\pm$ 0.25& 89.88 $\pm$ 0.31&  90.58$\pm$ 0.25& 90.60 $\pm$ 0.15&  90.99$\pm$ 0.25& 90.68 $\pm$ 0.27& 90.49 $\pm$ 0.18& 90.28 $\pm$ 0.22&  90.37$\pm$ 0.19\\
\midrule
4 &   \textbf{100}& 97.88&   \textbf{100}& 99.52 $\pm$ 3.39& 61.80 $\pm$20.45&99.62$\pm$2.67&   \textbf{100} $\pm$    0& 71.95 $\pm$18.38&\textbf{100} $\pm$    0& 99.98 $\pm$ 0.08&   \textbf{100} $\pm$    0&\textbf{100} $\pm$    0\\
5 & 32.88&   \textbf{100}&   \textbf{100}& 95.57 $\pm$ 9.62& 99.96 $\pm$ 0.06&\textbf{100}$\pm$0& 97.85 $\pm$10.63& 99.97 $\pm$ 0.06&\textbf{100} $\pm$    0&   \textbf{100} $\pm$    0& 99.99 $\pm$ 0.03&\textbf{100} $\pm$    0\\
10& 89.00& 67.12& 88.12& 88.67 $\pm$ 1.95& 78.36 $\pm$ 8.08&88.72$\pm$1.04& 90.02 $\pm$ 1.24& 80.22 $\pm$ 6.42&89.69$\pm$1.3 & \textbf{91.51} $\pm$ 0.54& \textbf{80.53} $\pm$ 4.03&\textbf{90.15} $\pm$0.76\\
11& 83.25& 75.75& 81.5 & 83.50 $\pm$ 2.52& 56.79 $\pm$ 4.04&82.83$\pm$2.22& \textbf{83.55} $\pm$ 0.62& 61.85 $\pm$ 4.10&\textbf{83.34}$\pm$0.68& 82.07 $\pm$ 1.55& 75.56 $\pm$ 4.02&81.73$\pm$0.81\\
16& 93.25& 63.5 & 92.88& 92.56 $\pm$ 1.61& 82.07 $\pm$11.16&92.96$\pm$0.99& 93.55 $\pm$ 0.80& \textbf{82.21} $\pm$11.78&\textbf{93.5} $\pm$0.98& \textbf{94.23} $\pm$ 0.47& 81.09 $\pm$ 3.98&92.3$\pm$0.57\\
19& 91.50& 23.62& 89.5 &91.03 $\pm$ 3.76& 88.63 $\pm$ 2.44&92.88$\pm$0.81& 94.14 $\pm$ 1.53& \textbf{88.64} $\pm$ 2.33&\textbf{93.85} $\pm$0.67& \textbf{94.72} $\pm$ 0.43& 72.72 $\pm$13.47&92.58$\pm$0.52\\
20& 73.38& 79.12& 79.88&79.5  $\pm$ 4.47& 78.02 $\pm$ 1.86&82.01$\pm$2.29& 86.72 $\pm$ 4.03& 78.57 $\pm$ 1.74&86.56$\pm$3.8& \textbf{91.22} $\pm$ 0.31& \textbf{86.84} $\pm$ 5.13&\textbf{91.01} $\pm$0.34\\
21& 58.88& 49.88& 55.88&60.55 $\pm$ 2.47& 40.84 $\pm$ 5.53&59.72$\pm$1.98& 62.19 $\pm$ 1.79& 44.87 $\pm$ 4.89&61.39$\pm$1.69& \textbf{64.10} $\pm$ 1.23& \textbf{56.09} $\pm$ 6.68&\textbf{62.33} $\pm$1.87\\
\midrule
Avg.& 89& 85.06&92.63& 92.8& 86.27&93.23& 93.77&87.63&93.78& \textbf{94.29}& \textbf{90.38}&\textbf{93.83}\\
\midrule
\midrule
3 & 10.88 & 13.75&12.75& 9.88 $\pm$ 1.69& 6.58 $\pm$ 1.52& 10.54$\pm$ 1.66& 8.80 $\pm$ 0.94& 6.98 $\pm$ 1.42& 10.29$\pm$ 1.27& 7.69 $\pm$ 0.62& 3.49 $\pm$ 1.09& 4.91$\pm$ 0.85\\
9 &   6.88&  9.88& 9.12& 7.77 $\pm$ 1.39& 5.38 $\pm$ 1.21&  8.32$\pm$ 1.3 & 6.43 $\pm$ 0.71& 5.72 $\pm$ 1.19&  7.99$\pm$ 1.04& 5.60 $\pm$ 0.45& 2.81 $\pm$ 0.84& 3.99$\pm$ 0.49\\
15&  18.50& 17.88&18   & 15.53$\pm$ 2.36& 6.93 $\pm$ 1.76& 14.64$\pm$ 2.04& 15.48$\pm$ 1.93& 8.38 $\pm$ 1.98& 15.59$\pm$ 2.09& 16.07$\pm$ 1.18& 5.89 $\pm$  2.6&11.19$\pm$ 1.99\\
\bottomrule
\end{tabular}
\end{threeparttable}
\end{table*}

% FAR
\renewcommand{\arraystretch}{1.09}
\begin{table*}[!htbp]
\centering
\fontsize{6}{7.5}\selectfont
\begin{threeparttable}
\caption{FARs (\%) of two subspaces $ \mathcal{PS}$ and $ \mathcal{RS}$ as well as the full space $ \mathcal{FS}$ by KPCA, DAE, DAE-PCA-1 and DAE-PCA-2 in the TE process. In addition, the value in bold represents the best performance under the same fault and statistics for all methods. Avg. gives the average of the mean of statistics for the first two categories of faults.}\label{table:farresults}
\begin{tabular}{llllllllllllllll}
\toprule
\multicolumn{1}{c}{Fault}&\multicolumn{3}{c}{KPCA}&\multicolumn{3}{c}{DAE}&\multicolumn{3}{c}{DAE-PCA-1}&\multicolumn{3}{c}{DAE-PCA-2}\\
 \cmidrule(ll){2-4} \cmidrule(ll){5-7}\cmidrule(ll){8-10}\cmidrule(ll){11-13}
 No.   & $ \mathcal{PS}$& $ \mathcal{RS}$& $ \mathcal{FS}$& $ \mathcal{PS}$& $ \mathcal{RS}$& $ \mathcal{FS}$& $ \mathcal{PS}$& $ \mathcal{RS}$& $ \mathcal{FS}$& $ \mathcal{PS}$& $ \mathcal{RS}$& $ \mathcal{FS}$\cr
\midrule
1 & 0.63& 1.25& 1.88& 1.98 $\pm$ 1.08& 2.06 $\pm$ 1.24&2.57	$\pm$1.58& 1.31 $\pm$ 0.67& 2.4  $\pm$ 1.22&2.68$\pm$1.18& 0.96 $\pm$ 0.36& 1.71 $\pm$ 1.03&1.37$\pm$0.64\\
2 & 0.63& 1.25& 1.25& 1.21 $\pm$ 0.65& 1.64 $\pm$ 0.86&1.99	$\pm$1.73& 0.96 $\pm$ 0.57& 1.86 $\pm$ 0.91&1.75$\pm$0.86& 0.35 $\pm$ 0.34& 0.87 $\pm$ 0.71&0.39$\pm$0.47\\
6 &    0& 0.63& 0.62& 0.68 $\pm$ 0.78& 2.26 $\pm$ 0.83&1.92	$\pm$1.15& 0.39 $\pm$ 0.53& 2.55 $\pm$ 1.14&2.1 $\pm$1.02& 0.34 $\pm$ 0.38& 1.17 $\pm$ 0.84&0.48$\pm$0.6 \\
7 & 1.88& 0.63& 0.62& 2.35 $\pm$ 1.36& 2.93 $\pm$ 1.36&3.5	$\pm$1.75& 2.04 $\pm$ 0.8 & 3.04 $\pm$ 1.38&3.14$\pm$1.35& 1.74 $\pm$ 0.29& 1.17 $\pm$ 1.02&1.12$\pm$0.76\\
8 & 4.38& 3.13& 3.75& 4.2  $\pm$ 1.62& 3.64 $\pm$ 1.6 &4.92	$\pm$1.9 & 3.45 $\pm$ 1.12& 3.92 $\pm$ 1.65&4.72$\pm$1.57& 2.50 $\pm$ 0.64& 1.54 $\pm$ 0.97&1.35$\pm$0.84\\
12& 6.88&   10& 8.75& 4.16 $\pm$ 1.56& 2.36 $\pm$ 1.34&3.92	$\pm$1.95& 2.82 $\pm$ 1.27& 3.04 $\pm$ 1.65&3.84$\pm$1.63& 2.11 $\pm$ 0.58& 1.22 $\pm$ 0.89&0.97$\pm$0.86\\
13& 1.25& 1.25& 1.25& 0.89 $\pm$ 0.77& 1.85 $\pm$ 0.94&1.95	$\pm$1.37& 0.3  $\pm$ 0.52& 2.01 $\pm$ 0.9 &1.88$\pm$0.87&    0 $\pm$    0& 0.55 $\pm$ 0.52&0.23$\pm$0.37\\
14& 1.25& 1.88& 1.88& 2.29 $\pm$ 1.33& 2.31 $\pm$ 1.33&2.95	$\pm$1.95& 1.73 $\pm$ 0.73& 2.29 $\pm$ 1.11&2.59$\pm$1.2 & 1.90 $\pm$ 0.74& 1.23 $\pm$ 0.92&0.81$\pm$0.65\\
17& 1.88& 2.5 & 1.25& 3.76 $\pm$ 1.58& 3.14 $\pm$ 1.71&4.14	$\pm$1.82& 3.36 $\pm$ 0.94& 3.45 $\pm$ 1.50&4.28$\pm$1.47& 2.91 $\pm$ 0.69& 2.48 $\pm$ 1.52&1.94$\pm$1   \\
18& 1.25& 2.5 & 1.88& 3.44 $\pm$ 0.98& 2.01 $\pm$ 1.04&3.44	$\pm$1.23& 2.96 $\pm$ 0.58& 2.16 $\pm$ 1.29&3.33$\pm$1.1 & 2.24 $\pm$ 0.54& 1.57 $\pm$ 0.89&1.74$\pm$0.83\\
\midrule
4 & 0.63& 1.88& 1.88& 1.6  $\pm$ 0.78& 1.8  $\pm$ 1.05&2.19 $\pm$1.07& 1.1  $\pm$ 0.67& 1.69 $\pm$ 1.01&1.96 $\pm$0.86& 0.9  $\pm$ 0.34& 1.89 $\pm$ 1.05& 1.24$\pm$0.61\\
5 & 0.63& 1.88& 1.88& 1.6  $\pm$ 0.78& 1.8  $\pm$ 1.05&2.19 $\pm$1.07& 1.1  $\pm$ 0.67& 1.69 $\pm$ 1.01&1.96 $\pm$0.86& 0.9  $\pm$ 0.34& 1.89 $\pm$ 1.05& 1.24$\pm$0.61\\
10& 1.25& 2.5 & 1.88& 1.65 $\pm$ 0.86& 2.63 $\pm$ 1.16&2.66 $\pm$1.82& 1.07 $\pm$ 0.54& 2.87 $\pm$ 1.25&2.28 $\pm$1.24& 1.06 $\pm$ 0.51& 1.2  $\pm$ 0.9 & 0.56$\pm$0.55\\
11& 2.5 & 2.5 &  2.5& 3.01 $\pm$ 0.96& 2.3  $\pm$ 1.1 &3.6  $\pm$1.27& 2.14 $\pm$ 0.81& 2.33 $\pm$ 1.11&3.4  $\pm$1.12& 1.66 $\pm$ 0.57& 1.61 $\pm$ 0.84& 1.35$\pm$0.61\\
16&23.75&32.5 & 30  &15.91 $\pm$ 3.56& 9.2  $\pm$ 2.91&15.7 $\pm$3.47&12.8  $\pm$ 2.76&10.98 $\pm$ 3.14&15.58$\pm$3.16&11.35 $\pm$ 1.36& 2.66 $\pm$ 1.52& 5.1 $\pm$1.54\\
19&    0& 1.25&    0& 1.45 $\pm$ 1.22& 1.57 $\pm$ 1.15&1.61 $\pm$1.21& 0.99 $\pm$ 0.72& 2.1  $\pm$ 1.15&1.84 $\pm$ 1.1& 0.62 $\pm$ 0.38& 1.17 $\pm$ 0.77& 0.66$\pm$0.53\\
20&    0&    0&    0& 0.79 $\pm$ 0.78& 2.12 $\pm$ 1.07&1.65 $\pm$1.35& 0.28 $\pm$ 0.4 & 2.24 $\pm$ 0.96&1.62 $\pm$0.93& 0.2  $\pm$ 0.32& 1.8  $\pm$ 1.29& 0.65$\pm$0.93\\
21& 6.25&    5& 5.62& 7.47 $\pm$ 2.07& 2.89 $\pm$ 1.51&6.71 $\pm$2.11& 5.69 $\pm$ 1.16& 3.99 $\pm$ 1.94&6.11 $\pm$1.89& 4.79 $\pm$ 0.73& 3.92 $\pm$ 1.75& 3.96$\pm$1.35\\
\midrule
Avg. & 3.06&4.03&3.72&3.25&2.7&3.76&2.47& 3.03&3.61&\textbf{2.03}& \textbf{1.65}& \textbf{1.41}\\
\midrule
\midrule
3 & 5.63& 2.5 &    5& 4.81 $\pm$ 1.78& 4.52 $\pm$ 1.59&5.18	 $\pm$2   & 4.7  $\pm$ 1.87& 4.84 $\pm$ 1.67&5.62 $\pm$1.8 & 6.22 $\pm$ 1.02& 4.6  $\pm$ 3.53&5.15 $\pm$2.5 \\
9 &15.63&22.50& 22.5&15    $\pm$ 2.94& 7.94 $\pm$ 2.87&14.31 $\pm$3.15&13.46 $\pm$ 1.66& 8.7  $\pm$ 2.7 &14.39$\pm$2.57&13.44 $\pm$ 1.38& 2.54 $\pm$ 1.30&5.66 $\pm$1.3 \\
15& 1.25&    0&    0& 2.29 $\pm$ 0.95& 1.94 $\pm$ 1.04&2.5	 $\pm$1.42& 2.3  $\pm$ 0.77& 2.11 $\pm$ 1.20&2.47 $\pm$1.28& 2.34 $\pm$ 0.68& 1.85 $\pm$ 1.18&1.35 $\pm$0.86\\
\bottomrule
\end{tabular}
\end{threeparttable}
\end{table*}

\subsubsection{Fault Types}
The description of the fault types is listed in Table \ref{table:faulttypes}. According to the difficulty level for faults to detect, they can be roughly divided into three categories as follows. The first includes the faults with large magnitudes: IDV(1), IDV(2), IDV(6)-IDV(8), IDV(12)-IDV(14), IDV(17), IDV(18), which are usually easily detectable. The second contains the faults: IDV(5), IDV(10), IDV(16), IDV(19)-IDV(21), which are relatively difficult to detect and the focus of detection. The third includes IDV(3), IDV(9) and IDV(15). They are incipient faults and usually excluded to be detected, because their mechanisms and modes are extremely complicated \cite{wangisa2020, zhang2009ces}. In our experiment, their detection results are still given, but they are not adopted for comparison.

\subsubsection{Dataset}
In this simulation, we choose 11 manipulated variables and 22 process variables to form the input matrix $\mathbf{X}$. The training dataset and validation dataset are made up of 1168 normal samples and 292 normal samples respectively. For the test dataset, it consists of 21 faulty sets, each of which includes 960 samples, where the first 160 samples are fault-free and the rest are faulty.

\subsubsection{Evaluation Index}
To evaluate the fault detection performance, two evaluation indexes, including the fault detection rate (FDR) and the false alarm rate (FAR) \cite{ren2020tii} are adopted in our experiments. A model is considered to be better than another model if its FDR is higher and FAR is lower.

\subsubsection{Comparative Method}
Comparative methods include KPCA \cite{lee2004ces}, DAE \cite{zhang2018jpc}. Among these methods, KPCA is a classical kernel-based nonlinear PCA method and DAE is a basic DAE method. In addition, we call the DAE-PCA method without introducing ${\Omega _\mathbf{T}}$ as DAE-PCA-1, which will also be regarded as a contrast method to demonstrate the indispensability of ${\Omega _\mathbf{T}}$ in DAE-PCA-2. Notice that the DAE-PCA method proposed in this paper actually refers to DAE-PCA-2, not DAE-PCA-1.

\subsubsection{Network Setting}
In neural network structure of DAE-PCA-1 and DAE-PCA-2, we utilize ReLU as the activation function. The model parameters are $N = 1168$, $m = 33$, $d = 33$, $a = 30$, ${\lambda _1} = {\left( {N \times m} \right)^{ - 1}} = 2.6 \times {10^{ - 5}}$, ${\lambda _2} = {\left( {N \times d} \right)^{ - 1}} = 2.6 \times {10^{ - 5}}$, ${\lambda _3} = 2/a = 0.067$.
The confidence level is $\alpha {\rm{ = }}0.99$. Besides, the Adam algorithm \cite{kingma2014} is selected as the optimizer. The maximum $ite{r_{\max }}$ of training iteration ($iter$) is set to $2 \times {10^4}$ and the learning rate ($lr$) is set piecewisely according to $iter$:
\begin{equation}
	lr = 0.01 \times {0.7^{\left\lfloor {iter/350} \right\rfloor }}{\kern 1pt} {\kern 1pt} {\kern 1pt} \left( {0 \le iter \le ite{r_{\max }}} \right).
\end{equation}
At the end of iteration, the model corresponding to the minimum reconstruction error of the verification set will be chosen as the optimal system model.

Here, the parameters of other methods are also given. For KPCA, it adopts RBF kernel and kernel parameter is $5\sqrt {330} \left( { \approx 90.83} \right)$. The number of the principal components is $30$. The network structure of DAE is the same as that of DAE-PCA-2.

\subsection{Implementation Results}
Since DAE, DAE-PCA-1 and DAE-PCA-2 all belong to the neural network approaches, their detection results will be affected by model initialization. Therefore, the test sets are independently tested $50$ times using these methods and then calculate the mean value and standard deviation of FDRs and FARs of statistics for the subsequent comparison.

The FDRs for 21 faults using all methods are shown in Table \ref{table:fdrresults}. For faults in the first category, the FDRs of these four methods are almost over $90\%$ in $\mathcal{PS}$, $\mathcal{RS}$ and $\mathcal{FS}$. It indicates that all comparative methods receive similar and satisfactory detection results. As for faults in the second category, DAE-PCA-2 have more superior detection performance for two subspaces and full space than other methods, due to its higher mean values of FDRs. Besides, DAE-PCA-2 also has smaller variances for most faults than other DAE-based methods, such as IDV(4) and IDV(16). It greatly reflects that DAE-PCA-2 with regularization term ${\Omega _\mathbf{T}}$ has higher stability than DAE-PCA-1 without it and DAE. In contrast, DAE and KPCA have relatively mediocre fault detection results. In addition, in order to further reflect the average performance of various methods for different faults, Avg. is given as the average of the mean of statistics for all faults in Table \ref{table:fdrresults}. Because all methods fail to detect the third category of faults, the calculation of Avg. does not include these faults. The average values of FDRs in DAE-PCA-2 method are also the highest, which indicates that DAE-PCA-2 method can produce a better mathematical expectation in FDRs.

Table \ref{table:farresults} displays the FARs for 21 faults using all methods. From Table \ref{table:farresults}, DAE-PCA-2 has the lowest FARs and Avg. of FARs among all nonlinear methods in both subspaces and the full space, for instance, IDV(16) and IDV(21). The results tell that DAE-PCA-2 is not prone to false alarm for normal samples, thereby reducing manual useless checks.

% detection time
\begin{figure}[!t]
	\setlength{\abovecaptionskip}{0pt}
	\setlength{\belowcaptionskip}{0pt}
	\centering
	\includegraphics[width=0.7\columnwidth]{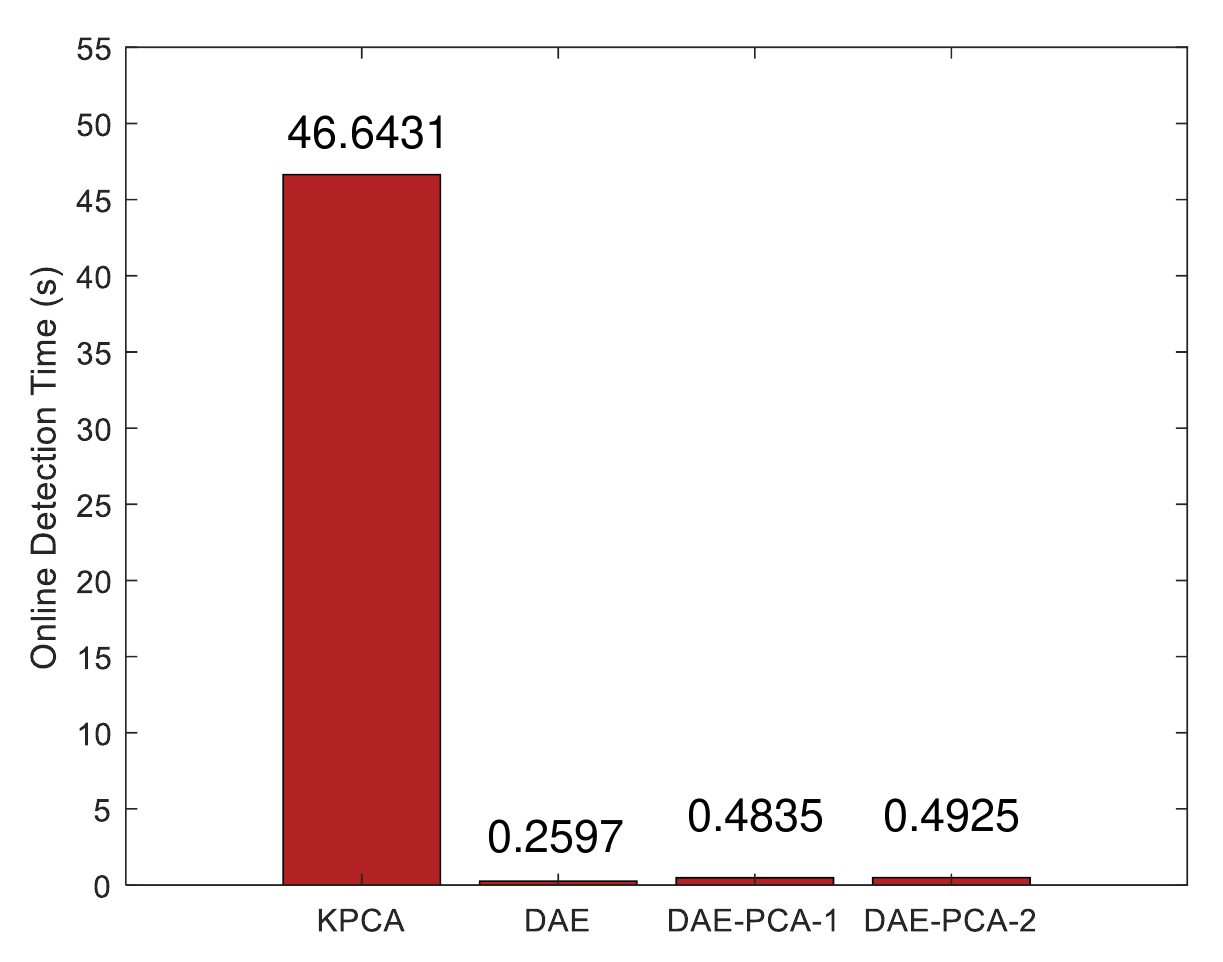}
	\caption{The online detection time (s) of all methods. Here, the detection time refers to all time of 960 samples in a fault.}
	\label{fig:time}
\end{figure}

To demonstrate that DAE-PCA method is a faster KPCA, the following experiment is given. In practical application, we usually do not pay much attention to the training time. But the detection time is an important indicator of real-time performance, so it as a key factor for the quality of the detection method needs to be considered. Fig. \ref{fig:time} lists the online detection time $t$ for all comparative methods. The online detection cost of KPCA is too heavy due to its kernel calculation that makes KPCA hard to apply in real systems. In comparison to KPCA, DAE-based methods including DAE, DAE-PCA-1 and DAE-PCA-2 need less detection time. Therefore, DAE-PCA-2 can meet the real-time requirements, but KPCA cannot.

In addition, by calculation, we get ${\left\| {\mathbf{P}^{\top}}\mathbf{P} - {\mathbf{I}_a} \right\|_F^2} = 6.49 \times {10^{ - 15}}$ from orthogonal projection matrix $\mathbf{P}$. The result demonstrates that the hard constraint given by our PCA module has a quite precise orthogonal property.

Through the above analysis, we can know that DAE-PCA-2, as a learnable KPCA, has the same or even better detection results than KPCA for different kinds of faults. Meanwhile, DAE-PCA-2, also as a faster KPCA, enables to better satisfy the real-time requirements than KPCA. Besides, DAE-PCA-2 has better detection performance and stability than DAE, which suggests the proposed DAE-FE nonlinear framework is much efficient.

In order to observe the situation of the statistic $BIC$ in $\mathcal{FS}$ of the sample in each moment, IDV(5) and IDV(20) are selected as the examples to illustrate the efficiency of the proposed DAE-PCA in detail. Without loss of generality, we randomly selected a trial to demonstrate the detection performance of all approaches based on neural networks.

IDV(5) is a step fault and the detection results for it is displayed in Fig. \ref{fig:Fault5}. The statistic $BIC$ can approximate the probability that the sample is faulty. The greater the value of statistic $BIC$ is, the greater probability the sample has to be considered as a faulty sample. From Fig. \ref{fig:Fault5}, for faulty samples, the statistic $BIC$ values of KPCA and DAE fluctuate between $0.6$ and $1$, while those of DAE-PCA-1 and DAE-PCA-2 remain almost constant at 1. Regarding the samples in the normal state, many statistic $BIC$ values by using DAE and DAE-PCA-1 exceed the threshold $0.01$, while the statistic $BIC$ indexes of KPCA has lower values than DAE and DAE-PCA-1. But DAE-PCA-2 has the lowest values among these methods. Besides, these four methods are all without detection delays.

% Fault5
\begin{figure}[!t]
	\setlength{\abovecaptionskip}{0pt}
	\setlength{\belowcaptionskip}{0pt}
	\centering
	\includegraphics[width=1\columnwidth]{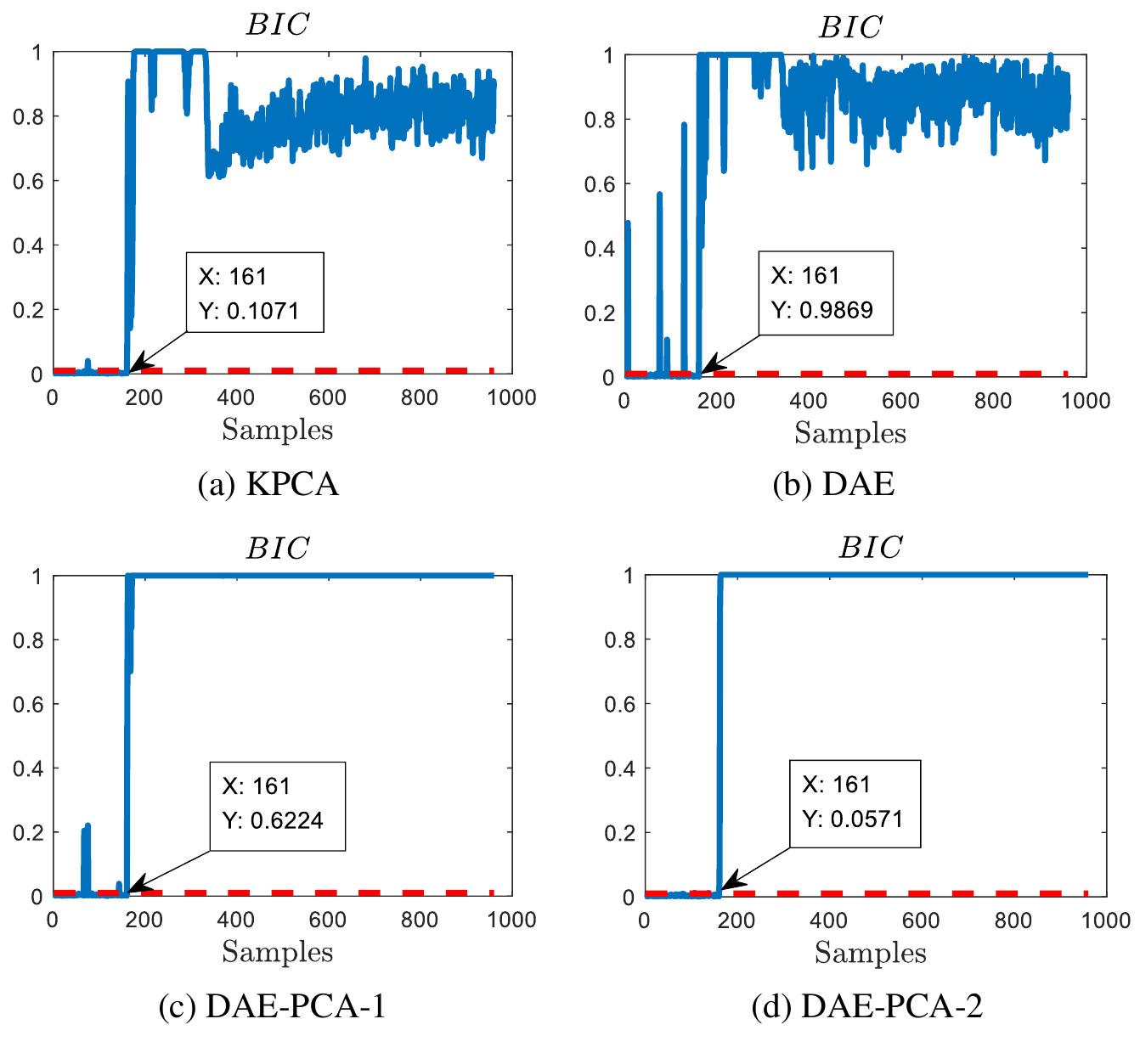}
	\caption{Detection results of Fault IDV(5) in the TE process: (a) KPCA, (b) DAE, (c) DAE-PCA-1 and (d) DAE-PCA-2.}
	\label{fig:Fault5}
\end{figure}

% Fault20
\begin{figure}[!t]
	\setlength{\abovecaptionskip}{0pt}
	\setlength{\belowcaptionskip}{0pt}
	\centering
	\includegraphics[width=1\columnwidth]{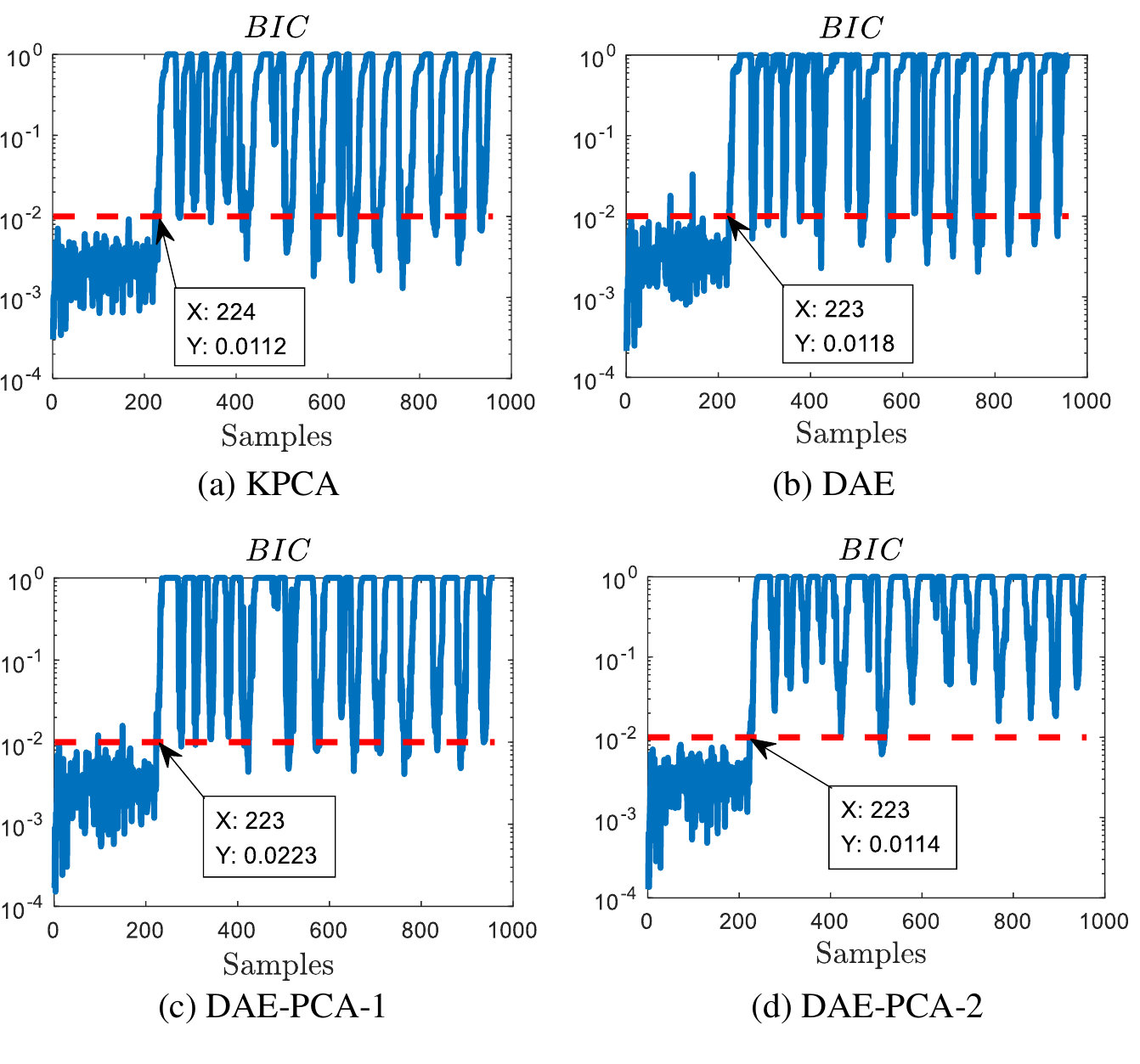}
	\caption{Detection results of Fault IDV(20) in the TE process: (a) KPCA, (b) DAE, (c) DAE-PCA-1 and (d) DAE-PCA-2.}
	\label{fig:Fault20}
\end{figure}

IDV(20) is a type-unknown fault. From Fig. \ref{fig:Fault20}, the detection results of all methods are provided. KPCA has relatively bad FDRs for statistics $BIC$ in faulty samples, while the performance of DAE and DAE-PCA-1 are at an average level. By contrast, DAE-PCA-2 shows outstanding results since its $BIC$ indexes are almost far above the thresholds in almost all faulty sample points. For normal samples, DAE performs unsatisfactorily since its $BIC$ in samples made missed alarms are much higher than the threshold, which is undesirable. KPCA, DAE-PCA-1 and DAE-PCA-2 are all satisfactory and far below the thresholds when the samples are free-fault. In detection delays, the delay time of all methods is nearly the same and is 63 sample points.

Through the analysis of the above two faults, it implies that the proposed DAE-PCA-2 has better separable property to handle the binary classification problem that whether the system is in a normal state or a faulty state, because they have has a larger margin. Furthermore, it also strongly indicates that the kernel learned by DAE-PCA-2 is more suitable for fault detection than that obtained by the predetermined kernel method.

\section{Conclusion}
In this paper, we present a DAE-PCA method based on a nonlinear DAE-FE framework that provides a way to transform a linear approach into the corresponding nonlinear version for fault detection tasks. The DAE-FE framework has been proven to equal to a corresponding learnable and faster kernel trick. In the case of fixed neural network structure, DAE-FE can automatically learn network weights that function similarly as the kernel parameters. Under this framework, we further design DAE-FE framework to put forward a DAE-PCA approach. Owing to Cayley Transform, the orthogonal projection matrix of PCA module in DAE-PCA method is guaranteed by hard constraints, which is implemented by the network structure. Furthermore, the proposed DAE-PCA method contains a regularization term that can reduce the variance of extracted system features, in order to make the normal samples more compact in the feature space. In comparison with KPCA, the proposed DAE-PCA has the ability to automatically learn an optimal parameter and has the faster computational efficiency. Through the experiment, the effectiveness of the proposed approach has been validated by detecting the faults in the TE process.


\begin{thebibliography}{99}

\bibitem{gaotie2015}
Z. Gao; C. Cecati; S. X. Ding, ``A survey of fault diagnosis and fault-tolerant techniques—part II: fault diagnosis with knowledge-based and hybrid/active approaches," \emph{IEEE Transactions on Industrial Electronics}, vol. 62, no. 6, pp. 3768-3774, Jun. 2015.

\bibitem{pengmpe2013}
K. Peng, K. Zhang, and G. Li, ``Quality-related process monitoring based on total kernel PLS model and its industrial application," \emph{Mathematical Problems in Engineering}, vol. 2013, pp. 1-14, Oct. 2013.

\bibitem{luoitm2018}
H. Luo, X. Yang, M. Krueger, S. X. Ding, and K. Peng, ``A plug-and-play monitoring and control architecture for disturbance compensation in rolling mills," \emph{IEEE/ASME Transactions on Mechatronics}, vol. 23, no. 1, pp. 200-210, Feb. 2018.

\bibitem{pengtie2016}
K. Peng, K. Zhang, B. You, J. Dong, and Z. Wang, ``A quality-based nonlinear fault diagnosis framework focusing on industrial multimode batch processes," \emph{IEEE Transactions on Industrial Electronics}, vol. 63, no. 4, pp. 2615– 2624, Apr. 2016.

\bibitem{jiangisj2021}
Y. Jiang, S. Yin, J. Dong, O. Kaynak, ``A Review on soft sensors for monitoring, control and optimization of industrial processes," \emph{IEEE Sensors Journal}, vol. 21, no. 11, pp. 12868-12881, Jun. 2021.

\bibitem{han2021tcyb}
H. Han, Y. Yang, L. Li, and S. X. Ding, ``Performance-based fault detection and fault-tolerant control for nonlinear systems with T–S fuzzy implementation," \emph{IEEE Transactions on Cybernetics}, vol. 51, no. 2, pp. 801-814, Feb. 2021.

\bibitem{yutcyb}
W. Yu, C. Zhao, and B. Huang, ``MoniNet with concurrent analytics of temporal and spatial information for fault detection in industrial processes," \emph{IEEE Transactions on Cybernetics}, DOI: 10.1109/TCYB.2021.3050398.

\bibitem{kodamana2019tcst}
H. Kodamana, R. Raveendran, and B. Huang, ``Mixtures of probabilistic PCA with common structure latent bases for process monitoring," \emph{IEEE Transactions on Control Systems Technology}, vol. 27, no. 2, pp. 838-846, Mar. 2019.

\bibitem{niutcyb}
Y. Niu , L. Sheng, M. Gao, and D. Zhou, ``Distributed intermittent fault detection for linear stochastic systems over sensor network," \emph{IEEE Transactions on Cybernetics}, DOI: 10.1109/TCYB.2021.3054123.

\bibitem{fazai2019eaai}
R. Fazai, M. Mansouri, K. Abodayeh, V. Puig, M. I. N. Raouf, H. Nounou, and M. Nounou, ``Multiscale Gaussian process regression-based generalized likelihood ratio test for fault detection in water distribution networks," \emph{Engineering Applications of Artificial Intelligence}, vol. 85, no. 3, pp. 474-491, Oct. 2019.

\bibitem{naderiauto2017}
E. Naderi, and K. Khorasani, ``A data-driven approach to actuator and sensor fault detection, isolation and estimation in discrete-time linear systems," \emph{Automatica}, vol. 85, pp. 165-178, Nov. 2017.

\bibitem{scholkopf1996kernel}
B. Sch\"{o}lkopf, A. J. Smola, and K. R. M\"{u}ller, ``Nonlinear component analysis as a kernel eigenvalue problem," \emph{Neural Computation}, vol. 10, no. 5, pp. 1299-1319, Jul. 1998.

\bibitem{geces2009}
Z. Ge, C. Yang, and Z. Song, ``Improved kernel PCA-based monitoring approach for nonlinear processes," \emph{Chemical Engineering Science}, vol. 64, no. 9, pp. 2245-2255, May 2009.

\bibitem{jiang2019iecr}
Q. Jiang, X. Yan, and B. Huang, ``Review and perspectives of data-driven distributed monitoring for industrial plant-wide processes," \emph{Industrial \& Engineering Chemistry Research}, vol. 58, no. 29, pp. 12899-12912, Jul. 2019.

\bibitem{yin2015tie}
S. Yin, X. Li, H. Gao, and O. Kaynak, ``Data-based techniques focused on modern industry: an overview," \emph{IEEE Transactions on Industrial Electronics}, vol. 62, no. 1, pp. 657-667, Jan. 2015.

\bibitem{zhang2018n}
K. Zhang, K. Peng, R. Chu, and J. Dong, ``Implementing multivariate statistics-based process monitoring: A comparison of basic data modeling approaches," \emph{Neurocomputing}, vol. 290 , pp. 172-184, May 2018.

\bibitem{simmini2021tcst}
F. Simmini, M. Rampazzo, F. Peterle, G. A. Susto, and A. Beghi, ``A self-tuning KPCA-based approach to fault detection in chiller systems," \emph{IEEE Transactions on Control Systems Technology}, DOI: 10.1109/TCST.2021.3107200.

\bibitem{fan2021tii}
J. Fan, T. WS Chow, and S. J. Qin, ``Kernel based statistical process monitoring and fault detection in the presence of missing data," \emph{IEEE Transactions on Industrial Informatics}, DOI: 10.1109/TII.2021.3119377.

\bibitem{deng2018tnnls}
X. Deng, X. Tian, S. Chen, and C. J. Harris, ``Nonlinear process fault diagnosis based on serial principal component analysis," \emph{IEEE Transactions on Neural Networks and Learning Systems}, vol. 29, no. 3, pp. 560-572, Mar. 2018.

\bibitem{lee2004ces}
J.-M. Lee, C. Yoo, S. W. Choi, P. A. Vanrolleghem, and I.-B. Lee, ``Nonlinear process monitoring using kernel principal component analysis," \emph{Chemical Engineering Science}, vol. 59, no. 1, pp. 223-234, Jan. 2004.

\bibitem{suntcyb}
Q. Sun, and Z. Ge, ``Gated stacked target-related autoencoder: a novel deep feature extraction and layerwise ensemble method for industrial soft sensor application," \emph{IEEE Transactions on Cybernetics}, DOI: 10.1109/TCYB.2020.3010331.

\bibitem{sakurada2014mlsda}
M. Sakurada and T. Yairi, ``Anomaly detection using autoencoders with nonlinear dimensionality reduction," \emph{in Proceedings of the MLSDA 2014 2nd Workshop on Machine Learning for Sensory Data Analysis}, pp. 4–11, 2014.

\bibitem{ahmed2021tpami}
I. Ahmed, T. Galoppo, X. Hu, and Y. Ding, ``Graph regularized autoencoder and its Application in unsupervised anomaly detection," \emph{IEEE Transactions on Pattern Analysis and Machine Intelligence}, to be published, DOI: 10.1109/TPAMI.2021.3066111.

\bibitem{creswell2019}
A. Creswell, and A. A. Bharath, ``Denoising adversarial autoencoders," \emph{IEEE Transactions on Neural Networks and Learning Systems}, vol. 30, no. 4, pp. 968-984, Apr. 2019.

\bibitem{tang2021isa}
P. Tang, K. Peng, and J. Dong, ``Nonlinear quality-related fault detection using combined deep variational information bottleneck and variational autoencoder," \emph{ISA Transactions}, vol. 114, pp. 444-454, Aug. 2021.

\bibitem{liutcyb}
J. Liu, L. Xu, Y. Xie, T. Ma, J. Wang, Z. Tang, W. Gui, H. Yin, and Hadi Jahanshahi, ``Toward robust fault identification of complex industrial processes using stacked sparse-denoising autoencoder with softmax classifier," \emph{ IEEE Transactions on Cybernetics}, DOI: 10.1109/TCYB.2021.3109618.

\bibitem{yu2020tcst}
W. Yu, and C. Zhao, ``Robust monitoring and fault isolation of nonlinear industrial processes using denoising autoencoder and elastic net," \emph{IEEE Transactions on Control Systems Technology}, vol. 28, no. 3, pp. 1083-1091, May 2020.

\bibitem{jang2020tii}
K. Jang, and S. Hong, M. Kim, J. Na, and I. Moon, ``Adversarial autoencoder-based feature learning for fault detection in industrial processes," \emph{IEEE Transactions on Industrial Informatics}, , to be published, DOI: 10.1109/TII.2021.3078414.

\bibitem{zhang2018jpc}
Z. Zhang, T. Jiang, S. Li, and Y. Yang, ``Automated feature learning for nonlinear process monitoring – An approach using stacked denoising autoencoder and k-nearest neighbor rule," \emph{Journal of Process Control}, vol. 64, pp. 49-61, Apr. 2018.

\bibitem{ren2020tii}
Z. Ren, W. Zhang, and Z. Zhang, ``A deep nonnegative matrix factorization approach via autoencoder for nonlinear fault detection," \emph{IEEE Transactions on Industrial Informatics}, vol. 16, no. 8, pp. 5042-5052, Aug. 2020.

\bibitem{cayley1846}
A. Cayley, ``Sur quelques propriétés des déterminants gauches," \emph{Journal für die reine und angewandte Mathematik}, vol. 32, pp. 119-123, 1846.

\bibitem{wise1990pca}
B. M. Wise, N. Ricker, D. V eltkamp, and B. R. Kowalski, ``A theoretical basis for the use of principal component models for monitoring multivariate processes," \emph{Process Control and Quality}, vol. 1, no. 1, pp. 41–51, 1990.

\bibitem{mohri2018book}
M. Mohri, A. Rostamizadeh, and A. Talwalkar, ``Foundations of machine learning," \emph{MIT press}, 2018.

\bibitem{ioffe2015icml}
S. Ioffe, and S. C. Szegedy, ``Batch normalization: Accelerating deep network training by reducing internal covariate shift," \emph{International conference on machine learning. PMLR}, pp. 448-456, 2015.

\bibitem{parzen1962}
E. Parzen, ``On estimation of a probability density function and mode," \emph{Ann. Math. Statis}, vol. 33, no. 3, pp. 1065-1076, Sep. 1962.

\bibitem{ge2010jpc}
Z. Ge, M. Zhang, and Z. Song, ``Nonlinear process monitoring based on linear subspace and Bayesian inference," \emph{Journal of Process Control}, vol. 20, no. 5, pp. 676-688, Jun. 2010.

\bibitem{huang2019tcst}
J. Huang, and X. Yan, ``Quality-driven principal component analysis combined with kernel least squares for multivariate statistical process monitoring," \emph{IEEE Transactions on Control Systems Technology}, vol. 27, no. 6, pp. 2688-2695, Sep. 2019.

\bibitem{chiang2000}
L. H. Chiang, E. L. Russell, and R.D. Braatz, ``Fault detection and diagnosis in industrial systems," \emph{Springer Science \& Business Media}, 2000.

\bibitem{downs1993}
J. J. Downs, and E. F. Vogel, ``A plant-wide industrial process control problem," \emph{Computers \& Chemical Engineering}, vol. 17, no. 3, pp. 245-255, Mar. 1993.

\bibitem{kingma2014}
D. P. Kingma, and B. Jimmy, ``Adam: A method for stochastic optimization," \emph{Computer Science}, 2014.

\bibitem{wangisa2020}
Y. Wang, Z. Pan, X. Yuan, C. Yang, and W. Gui, ``A novel deep learning based fault diagnosis approach for chemical process with extended deep belief network," \emph{ISA Transactions}, vol. 96, pp. 457–467, Jan. 2020.

\bibitem{zhang2009ces}
Y. Zhang, ``Enhanced statistical analysis of nonlinear processes using KPCA, KICA and SVM," \emph{Chemical Engineering Science}, vol. 64, no. 5, pp. 801–811, Mar. 2009.

\end{thebibliography}
\end{document}